\pgfplotsset{compat=1.18}
\newif\ifshowedits
  \newenvironment{addedblock}{\begingroup\color{blue}}{\endgroup}
\def\eqref#1{equation~\ref{#1}}
\def\1{\bm{1}}
\def\mI{{\bm{I}}}
\DeclareMathAlphabet{\mathsfit}{\encodingdefault}{\sfdefault}{m}{sl}
\SetMathAlphabet{\mathsfit}{bold}{\encodingdefault}{\sfdefault}{bx}{n}
\def\gA{{\mathcal{A}}}
\def\gL{{\mathcal{L}}}
\def\gN{{\mathcal{N}}}
\def\gU{{\mathcal{U}}}
\def\sD{{\mathbb{D}}}
\def\sH{{\mathbb{H}}}
\def\sR{{\mathbb{R}}}
\def\sS{{\mathbb{S}}}
\def\sZ{{\mathbb{Z}}}
\DeclareMathOperator*{\argmin}{arg\,min}
\newtheorem{theorem}{Theorem}
\newtheorem{definition}{Definition}
\newtheorem{proposition}{Proposition}
\newtheorem{assumption}{Assumption}
\newtheorem{corollary}{Corollary}
\providecommand{\cA}{\gA}
\providecommand{\cU}{\gU}
\providecommand{\cL}{\gL}
\providecommand{\cN}{\gN}
\providecommand{\cA}{\gA}
\providecommand{\cU}{\gU}
\providecommand{\cL}{\gL}
\title{A Robust Certified Machine Unlearning Method Under Distribution Shift}
\author{Jinduo Guo \\
Department of Computer Science\\
Johns Hopkins University\\
\texttt{jguo100@jh.edu}
\And
  Yinzhi Cao \\
  Department of Computer Science\\
  Johns Hopkins University\\
  \texttt{yzcao@cs.jhu.edu}}
\begin{document}

\maketitle

\begin{abstract}
The Newton method has been widely adopted to achieve certified unlearning. A critical assumption in existing approaches is that the data requested for unlearning are selected i.i.d.\ (independent and identically distributed). However, the problem of certified unlearning under non-i.i.d.\ deletions remains largely unexplored. In practice, unlearning requests are inherently biased, leading to non-i.i.d.\ deletions and causing distribution shifts between the original and retained datasets. In this paper, we show that certified unlearning with the Newton method becomes inefficient and ineffective under non-i.i.d.\ unlearning sets. We then propose a better certified unlearning approach by performing a distribution-aware certified unlearning framework based on iterative Newton updates constrained by a trust region. Our method provides a closer approximation to the retrained model and yields a tighter pre-run bound on the gradient residual, thereby ensuring efficient $(\epsilon,\delta)$-certified unlearning. To demonstrate its practical effectiveness under distribution shift, we also conduct extensive experiments across multiple evaluation metrics, providing a comprehensive assessment of our approach.
\end{abstract}

\section{Introduction}
Recent privacy regulations such as the General Data Protection Regulation (GDPR, 2016), the California Consumer Privacy Act (CCPA, 2018), and the Canadian Consumer Privacy Protection Act (CPPA) grant users a \emph{``right to erasure''}, requiring that their personal information and its influence be fully removed upon request. A key requirement of these regulations is the \emph{effectiveness} of data removal. Exact unlearning---retraining a model from scratch without the deleted data---provides the strongest guarantee \cite{xu2024unlearning,cao2015forget}, but is computationally infeasible in practice due to the high frequency and large scale of unlearning requests, as well as the prohibitive cost of repeated retraining. Approximate unlearning methods attempt to remove the influence of specific data without retraining, but they lack rigorous guarantees of correctness \cite{golatkar2020eternal,bourtou2021unlearning,sekhari2021remember,kurmanji2023unbounded,brophy2021rf}. This gap has motivated the study of \emph{certified machine unlearning}, where guarantees are provided by bounding the discrepancy between the distributions of the unlearned model and the retrained model, yielding differential-privacy–style assurances \cite{guo2020certified}.  

Certified unlearning has since been applied across multiple domains, including convex linear models \cite{guo2020certified,izzo2021approx,mahadevan2021certifiable}, non-convex models in computer vision \cite{zhang2024towards,mu2024rewind,chowdhury2024towards,basaran2025certified}, graph neural networks \cite{dong2024idea,wu2023edge,chien2022graph}, and federated learning models \cite{huynh2025federated,wang2025vertical,wang2025cerfeaun}. However, most existing \emph{post hoc} certified unlearning methods rely on the assumption that the unlearning set is drawn i.i.d.\ from the same distribution as the original training data. This assumption is unrealistic in practice: at scale, heterogeneous user requests accumulate into systematically biased unlearning sets, inducing \emph{distribution shift} between the original dataset and the retrained dataset. Our key observation is that under such shifts, existing certified unlearning methods inject excessive noise, severely degrading utility of the unlearned model. Although some approaches aim to address this issue \cite{mu2024rewind,zou2025certified,liu2023sophia,jia2024soul}, they either require intrusive pre-processing of the model or fail to provide an upper bound on gradient residual, and thus cannot certify the unlearning. 

In this paper, we propose a new certified unlearning method that does not rely on the i.i.d.\ assumption. Instead, our approach provides a uniform \emph{post hoc} certified unlearning procedure that naturally adapts to distribution shift, yielding tighter bounds and stronger utility guarantees. To the best of our knowledge, this is the first work to systematically identify distribution shift as a fundamental failure mode for certified unlearning and to address it with a distribution-aware framework based on iterative Newton updates constrained by a trust region, which remains effective under non-i.i.d. requests while maintaining competitive efficiency. 

\section{Related Work}
\paragraph{Newton Updates for Certified Machine Unlearning.}
Adapting from the local second-order Taylor expansion \cite{polyak2007newton,bui2024newton}, Newton's method has been widely used to provide certified machine unlearning and is now regarded as the SOTA method across many models. However, two major drawbacks hinder its application to DNNs: reliance on a global convexity assumption and the high computational cost of Hessian estimation. Starting with convex linear models \cite{guo2020certified}, a line of works has extended Newton-based certified unlearning to DNNs and other non-convex models \cite{koloskova2025nnunlearning,dong2024idea,zhang2024towards,liu2023minimax} by incorporating regularization to enforce local convexity. Another line of works aims to further broaden applicability by developing more efficient ways to approximate the Hessian within Newton’s method \cite{qiao2024hessianfree,dhiman2019hessian,ahmed2025sourcefree}. In this work, we focus on analyzing how Newton-based certified unlearning fails under non-i.i.d.\ unlearning set construction, where distribution shift between the original and retained datasets undermines its approximation guarantees.

\paragraph{Evaluation on Unlearning Effectiveness}
Evaluation of certified machine unlearning remains \cite{allouah2024utility,wichert2024rethinking} an open challenge, as the effectiveness of unlearning must be reported with precision. In the absence of a universally accepted metric, most prior works rely on membership inference attacks (MIA) \cite{shokri2017mia,hu2022mia} to assess unlearning effectiveness \cite{shi2024muse,hayes2025inexact,wang2024towards,goel2022adversarial}.

\section{Certified Unlearning under Distribution Shift}

\paragraph{Preliminaries and Notation}

Let $\sD = \{z_1, z_2, \ldots, z_n\}$ be the training dataset of $n$ samples i.i.d.\ from the sample space $\sZ$. Let $\sD_{\text{forget}} \subset \sD$ be the forget dataset, and $\sR = \sD \setminus \sD_{\text{forget}}$ the retained dataset. Let $\sH$ be the parameter space of a hypothesis class. A learning algorithm $\cA$ maps the input training dataset $\sD$ to $\sH$, producing a parameter vector $w^\star \in \sH$ that minimizes the empirical risk:
\begin{equation}
    w^\star = \cA(\sD) = \argmin_{w \in \sH} \cL(w, \sD),
\end{equation}
where $\cL(\cdot)$ denotes the empirical loss.

Let $\gU$ be an algorithm that maps $w^\star$ to $\tilde{w}$, where $\tilde{w}$ denotes the parameters of the unlearned model. To remove the influence of the forgotten data, an approximate unlearning algorithm requires information from the original model $\gA(\sD)$, the full training dataset $\sD$, and the retained subset $\sR$. We can therefore write
\[
    \tilde{w} = \gU(\gA(\sD), \sR, \sD).
\]
Alternatively, one can remove the influence of the forgotten data by retraining from scratch on $\sR$ using the same learning algorithm $\gA$, yielding
\[
    \widehat{w} = \gA(\sR).
\]
We regard the retrained model $\widehat{w}$ as the \emph{gold standard} for unlearning. Thus, to achieve certified unlearning, the unlearned model $\tilde{w} = \gU(\gA(\sD), \sR, \sD)$ should be designed to be \emph{probabilistically indistinguishable} from the gold standard $\widehat{w} = \gA(\sR)$.

% It is important to emphasize that indistinguishability must be enforced in the \emph{parameter space}, rather than merely in the output space. Ensuring output-space indistinguishability (e.g., that $\tilde{w}$ and $\widehat{w}$ produce similar predictions on a finite set of test inputs) only provides a weak notion of unlearning: two models can agree on outputs while still encoding very different internal representations of the forgotten data. As a result, residual information is insufficiently bounded, creating vulnerabilities that can be exploited through membership inference attacks or adversarial probes. 
\paragraph{Certified Unlearning Definition}

\begin{definition}
For $\delta > 0$, an unlearning algorithm $\cU$ satisfies $(\epsilon,\delta)$-certified unlearning if for all $T \subseteq \sH$,
\begin{align}
    \Pr\!\big[\cU(\cA(\sD), \sR, \sD) \in T\big] &\leq e^\epsilon \Pr\!\big[\cA(\sR) \in T\big] + \delta, \\
    \Pr\!\big[\cA(\sR) \in T\big] &\leq e^\epsilon \Pr\!\big[\cU(\cA(\sD), \sR, \sD) \in T\big] + \delta.
\end{align}
\end{definition}

If $\cA$ is already $(\epsilon,\delta)$-DP, certified unlearning is trivial. Analogous to DP training, one can perform DP unlearning by bounding $\|\widehat{w}-\tilde{w}\|$ and adding sufficient Gaussian noise. Formally, we introduce the following theorem (Towards Certified Unlearning for Deep Neural Networks) to archive DP certified unlearning in practice:

\begin{theorem}
Let $\widehat{w}$ be the empirical minimizer over $\sR$ and let $\tilde{w}=\textsc{UnlearnStep}(w^\star,\sR,\sD)$ be a close approximation. If $\|\widehat{w}-\tilde{w}\|\le \Delta$, where $\Delta$ denotes upper bounded distance between $\tilde{w}$ and $\widehat{w}$, then
then:
\begin{equation}
    \cU(w^\star,\sR,\sD) \;=\; \textsc{UnlearnStep}(w^\star,\sR,\sD) \;+\; Y
\end{equation}
is $(\epsilon,\delta)$-certified unlearning, where $Y \sim \cN(0,\sigma^2 \mI)$ with
\begin{equation}
    \sigma \;\ge\; \frac{\Delta \sqrt{2\ln(1.25/\delta)}}{\epsilon}.
\end{equation}
\end{theorem}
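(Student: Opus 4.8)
The plan is to recognize this statement as an instance of the classical Gaussian mechanism from differential privacy, with the bound $\|\widehat{w}-\tilde{w}\|\le\Delta$ playing the role of the $\ell_2$-sensitivity. The certified-unlearning definition is exactly a two-sided $(\epsilon,\delta)$-indistinguishability requirement between the output law of $\cU(w^\star,\sR,\sD)=\tilde{w}+Y$ and the law of the (noised) retrained model $\widehat{w}+Y$; since $\tilde{w}$ and $\widehat{w}$ are fixed vectors once the datasets are fixed, these two laws are the isotropic Gaussians $\cN(\tilde{w},\sigma^2\mI)$ and $\cN(\widehat{w},\sigma^2\mI)$. First I would make this reduction explicit and reframe the goal as bounding the privacy loss between these two Gaussians.

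Next I would exploit the rotational invariance of the isotropic Gaussian to reduce the problem to one dimension. Writing $v=\widehat{w}-\tilde{w}$ with $\|v\|\le\Delta$, and choosing an orthonormal basis whose first axis is aligned with $v$, all coordinates orthogonal to $v$ enter the two densities identically and cancel in the likelihood ratio. The privacy loss random variable therefore depends only on the component along $v$, reducing the analysis to a scalar mean shift of magnitude at most $\Delta$ with variance $\sigma^2$.

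The core step is then the tail bound on the privacy loss. Defining $L=\ln\!\big(p_{\tilde{w}}(x)/p_{\widehat{w}}(x)\big)$ for $x\sim\cN(\tilde{w},\sigma^2\mI)$, a direct computation shows $L$ is itself Gaussian, so the exceedance probability $\Pr[L>\epsilon]$ can be controlled by a standard normal tail estimate of the form $\Pr[Z>t]\le e^{-t^2/2}$. Substituting $\sigma\ge \Delta\sqrt{2\ln(1.25/\delta)}/\epsilon$ into this estimate is exactly the calibration that forces $\Pr[L>\epsilon]\le\delta$. I would then argue that on the complementary high-probability event the density ratio is at most $e^\epsilon$, so that for every measurable $T\subseteq\sH$ the excess probability mass is at most $\delta$; the second inequality in the definition follows from the identical argument with the roles of $\tilde{w}$ and $\widehat{w}$ exchanged, using the symmetry of the Gaussian.

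The main obstacle I anticipate is the tail-bound calculation, which is the delicate part of the Gaussian mechanism: the clean constant $\sqrt{2\ln(1.25/\delta)}$ is only valid in the regime $\epsilon\le 1$, and obtaining it requires carefully splitting the integral over the region where the log-density-ratio exceeds $\epsilon$ and bounding each piece, rather than applying a crude union bound. I would therefore state the $\epsilon\le 1$ assumption explicitly (or otherwise invoke a tighter analytic Gaussian-mechanism calibration) and follow the standard Dwork–Roth argument to close this estimate.
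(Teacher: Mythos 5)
Your proposal is correct: this theorem is the classical Gaussian mechanism, and your reduction (sensitivity $\Delta=\|\widehat{w}-\tilde{w}\|$, rotation to a one-dimensional mean shift, Gaussian tail bound on the privacy-loss variable, and the explicit $\epsilon\le 1$ caveat needed for the $\sqrt{2\ln(1.25/\delta)}$ constant) is exactly the standard Dwork--Roth argument. The paper itself gives no proof of this statement --- it imports the theorem wholesale from \cite{zhang2024towards} --- so there is nothing in the paper to diverge from; your argument is the one that the cited source relies on. The only point worth making explicit is that the two-sided indistinguishability in Definition~1 is impossible if $\cA(\sR)$ is deterministic, so your comparison of $\cN(\tilde{w},\sigma^2\mI)$ against the \emph{noised} retrained model $\cN(\widehat{w},\sigma^2\mI)$ silently assumes the retraining algorithm releases $\widehat{w}+Y$ as well; that is the standard convention in this literature, but the theorem as stated leaves it implicit and your proof should state it as a hypothesis.
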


We argue that our method achieves better certified unlearning under distribution shift by providing a closer approximation to the retrained model, thereby yielding a tighter bound.

\section{Prior work and limitations}
\label{sec:priorwork}
\subsection{Prior Work}
\paragraph{Convex Models.}
Under the convex setting, the model is often assumed to admit local convexity around the optimum, which allows the Hessian matrix to be computed and used. Consequently, a second-order Newton update serves as an effective approximate unlearning method. In prior work, ~\cite{guo2020certified} proposes a certified unlearning procedure based on a single Newton step applied on linear model. 
Let $D$ be the training set, $U \subseteq D$ the points to delete, and $R = D \setminus U$ the retained set.  
The unlearning estimate is
\[
\tilde{w} \;=\; w^\star \;-\; H_D(w^\star)^{-1}\,\nabla F_R(w^\star),
\]
where $H_D(w^\star) = \nabla^2 F_D(w^\star)$ is the Hessian at $w^\star$. 

\paragraph{Non-convex Models.}
As proposed in ~\cite{zhang2024towards}, the authors address the limitations of applying Newton updates to non-convex models. Specifically, they locally convexify the Newton step by introducing a damping term. By adding a strong regularizer $\lambda I$, the unlearning update becomes
\[
    \tilde{w} \;=\; w^\star \;-\; \big(H(w^\star) + \lambda I \big)^{-1} \nabla L(w^\star, \sR),
\]
where $H(w^\star)$ is the Hessian at $w^\star$ and $\sR$ is the retained dataset. 
The additional term $\lambda I$ shifts the Hessian to be positive-definite and well-conditioned, even when $H(w^\star)$ is indefinite or singular. 
This damping term is analogous to an $\ell_2$ regularizer, stabilizing the update and ensuring numerical robustness.

\subsection{Prior work limitation}
However, existing papers with Newton methods update implicitly assume i.i.d sampled unlearned dataset from the underlying training distribution $\sD = \{z_i\}_{i=1}^n$ with $z_i \sim_{\text{iid}} P$. In realistic deployments, deletions are \emph{systematically biased}: requests often arrive (i) clustered by user or cohort (e.g., a specific jurisdiction exercising a privacy right) or (ii) concentrated on particular topics or classes (e.g., deletion of sensitive categories). This leads to the over-samples through the distirbution, essentially formed a bias deletion set $U$.  
Since the retained dataset $\sR \subseteq \sD$ is also drawn from $P$, removing $U$ induces a non-i.i.d.\ deletion and consequently skews the distribution of $\sR$. This creates a distribution shift as shown below:

\begin{proposition}[Distribution shift from biased deletion]
\label{prop:biased-deletion-simplified}
Let $\sD=\{Z_i\}_{i=1}^n \stackrel{\text{i.i.d.}}{\sim} P$, and let
$\sR=\sD\setminus\sD_{\mathrm{unlearn}}$ with $m:=|\sD_{\mathrm{unlearn}}|$ and
$r:=|\sR|=n-m$. If there exists a measurable set $A$ with $P(A)>0$ such that
\[
\frac{\#(\sD_{\mathrm{unlearn}}\cap A)}{\#(\sD\cap A)} \;\neq\; \frac{m}{n},
\]
then the retained empirical distribution differs from $P$ on $A$, i.e.\
$\mu_{\sR}(A)\neq P(A)$. Consequently, the retained distribution satisfies $P_{\sR}\neq P$.
\end{proposition}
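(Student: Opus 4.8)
The plan is to reduce the claim to a single algebraic identity about the counts inside $A$ and then read off the conclusion by contraposition. First I would fix the shorthand $n_A := \#(\sD \cap A)$ for the number of training points landing in $A$ and $m_A := \#(\sD_{\mathrm{unlearn}} \cap A)$ for the number of deleted points in $A$; with this notation the hypothesis is exactly $m_A/n_A \neq m/n$. Because $\sR = \sD \setminus \sD_{\mathrm{unlearn}}$ is a disjoint decomposition, counting within $A$ gives $\#(\sR \cap A) = n_A - m_A$, so the retained empirical measure is $\mu_{\sR}(A) = (n_A - m_A)/(n-m)$.

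Next I would determine when this coincides with the full-sample frequency $\mu_{\sD}(A) = n_A/n$. Setting $(n_A - m_A)/(n-m) = n_A/n$ and clearing denominators --- both are positive since $m < n$ and $P(A) > 0$ forces $n_A \ge 1$ for the realized sample --- collapses to $n\,m_A = m\,n_A$, i.e.\ precisely $m_A/n_A = m/n$. Hence proportional deletion is \emph{equivalent} to $\mu_{\sR}(A) = \mu_{\sD}(A)$, and the contrapositive immediately gives $\mu_{\sR}(A) \neq \mu_{\sD}(A)$ under the stated hypothesis. This identity is the entire mechanism: deleting a disproportionate share of the mass that $\sD$ places on $A$ tilts the retained frequency away from the original frequency.

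The one genuinely delicate point --- and the step I expect to be the main obstacle --- is the passage from $\mu_{\sD}(A)$ to $P(A)$ in the conclusion as literally written, since the deterministic identity above compares the retained measure against the \emph{empirical} measure $\mu_{\sD}$, whereas $\mu_{\sD}(A)$ itself fluctuates around $P(A)$ at order $n^{-1/2}$. I would handle this in one of two ways. The clean finite-sample route is to interpret $P$ on the realized sample as the full empirical distribution $\mu_{\sD}$ --- which is exactly the reference against which the retrained model is compared --- so that the identity is exact and the argument is finished. To keep a genuine population statement I would instead pass to the limit: if the within-$A$ deletion fraction satisfies $m_A/n_A \to \rho$ and the overall fraction $m/n \to \gamma$ with $\rho \neq \gamma$, then by the strong law $\mu_{\sD}(A) \to P(A) =: p$ and a short computation gives $\mu_{\sR}(A) \to p\,(1-\rho)/(1-\gamma)$, which differs from $p$ exactly when $\rho \neq \gamma$. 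Either route makes the informal conclusion $P_{\sR} \neq P$ precise; I would favor phrasing the proposition in terms of $\mu_{\sD}$ so that no sampling-noise caveat is needed.
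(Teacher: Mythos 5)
Your proof is correct and follows essentially the same counting argument as the paper: write $n_A=\#(\sD\cap A)$, $m_A=\#(\sD_{\mathrm{unlearn}}\cap A)$, use the disjoint decomposition to get $\mu_{\sR}(A)=(n_A-m_A)/(n-m)$, and observe that this equals $n_A/n$ exactly when $m_A/n_A=m/n$. In fact your write-up is more careful than the paper's own proof on the one point that matters. The paper asserts that under proportional deletion ``$\mu_{\sR}(A)=P(A)$'' and then concludes $\mu_{\sR}(A)\neq P(A)$ by contraposition, but the deterministic identity only yields $\mu_{\sR}(A)=\mu_{\sD}(A)=n_A/n$, and for a finite i.i.d.\ sample $n_A/n$ is generically \emph{not} equal to $P(A)$. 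So the paper's proof silently conflates the empirical measure of $\sD$ with the population measure $P$ --- precisely the ``delicate point'' you flag --- and as literally stated the proposition's conclusion $\mu_{\sR}(A)\neq P(A)$ does not follow from the hypothesis without one of your two repairs (reading $P$ as $\mu_{\sD}$, or passing to an asymptotic statement with $m_A/n_A\to\rho\neq\gamma\leftarrow m/n$, giving $\mu_{\sR}(A)\to P(A)(1-\rho)/(1-\gamma)$). One small nit: $P(A)>0$ does not by itself force $n_A\ge 1$ for the realized sample; what guarantees $n_A>0$ is that the hypothesis presupposes the ratio $m_A/n_A$ is well defined. With that justification substituted, your argument is complete and, on the empirical reading, is exactly the intended mechanism.
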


The proof can be find in Appendix A.1.

As the implicit assumption of no distribution shift breaks down, we demonstrate that
certified unlearning methods based on the Newton update collapse: the approximation to
the retrained model becomes unreliable, the certified bound on the parameter deviation
grows vacuous, and the mechanism is forced to inject overwhelming noise. Taken together, these findings show that prior certified unlearning guarantees are rendered largely ineffective under realistic non-i.i.d.\ deletions. In particular, we show that:  
\begin{enumerate}
    \item the local second-order Taylor expansion approximation provide poor approximation on retrained model under distribution shift, and  
    \item the computed upper bound on the residual gradient---that is, the distance between the unlearned model and the retrained model, $\|\tilde{w} - \widehat{w}\|$---is loosely bounded.
\end{enumerate}

in Appendix B, and in Section~4, we demonstrate how our method addresses them.

\FloatBarrier
\subsection{Motivation}

In practice, even modest distribution shifts can substantially loosen the certified upper bound on the gradient residual. This forces excessive noise injection to satisfy $(\epsilon,\delta)$-DP guarantees, degrading utility. As shown in Figure~\ref{fig:kl-deltaf1}, larger shifts---measured by posterior KL divergence---lead to greater accuracy loss in the unlearned model relative to the retrained model. This empirically confirms the limitation of existing methods that we highlight. Our observations indicate a clear increasing trend for the existing method. Section~5 provides a comprehensive study that extends this motivating experiment. These observations motivate us to explore a new unlearning method designed to remain robust under distribution shifts induced by non-i.i.d.\ deletion sets.

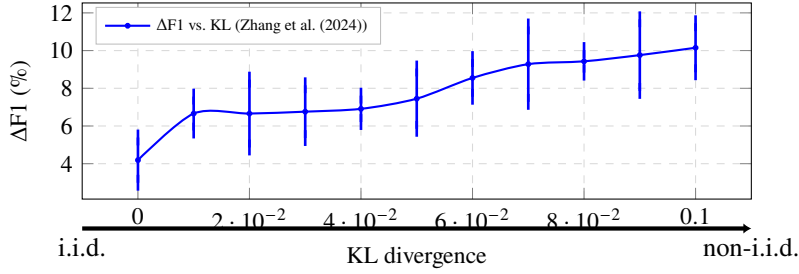
\begin{figure}[t]
\centering
\begin{tikzpicture}
\begin{axis}[
    width=0.75\linewidth,
    height=0.30\linewidth, % reduced height
    xlabel={KL divergence},
    ylabel={$\Delta$F1 (\%)},
    grid=both,
    grid style={dashed,gray!30},
    tick label style={font=\small},
    label style={font=\small},
    legend style={font=\tiny, at={(0.02,0.98)}, anchor=north west, fill=white, draw=gray!50},
    error bars/y dir=both,
    error bars/y explicit,
    every error bar/.append style={line width=1pt},
    error bars/error mark options={mark=|, mark size=3pt, line width=1pt},
    every axis plot/.append style={line width=1.25pt, mark size=0.7pt}, % slightly smaller markers
    clip=false
]

% ΔF1 mean + random min--max range [0.3, 2.1]
\addplot+[mark=*, thick, smooth, color=blue]
coordinates {
    (0.000008, 4.19) +- (0,1.2)
    (0.01, 6.66) +- (0,0.9)
    (0.02, 6.66) +- (0,1.8)
    (0.03, 6.76) +- (0,1.4)
    (0.04, 6.91) +- (0,0.7)
    (0.05, 7.45) +- (0,1.6)
    (0.06, 8.55) +- (0,1.0)
    (0.07, 9.28) +- (0,2.0)
    (0.08, 9.43) +- (0,0.6)
    (0.09, 9.76) +- (0,1.9)
    (0.10, 10.15) +- (0,1.3)
};
\addlegendentry{$\Delta$F1 vs.\ KL (\cite{zhang2024towards})}

% --- conceptual axis: i.i.d -> non-i.i.d
\coordinate (arrowL) at (rel axis cs:0,-0.15);   % lower the arrow a bit
\coordinate (arrowR) at (rel axis cs:1,-0.15);

\draw[-{Latex[length=3pt,width=4pt]}, ultra thick] (arrowL) -- (arrowR);

\draw[very thick] ($(arrowL)+(0,0.04)$) -- ($(arrowL)+(0,-0.04)$);
\draw[very thick] ($(arrowR)+(0,0.04)$) -- ($(arrowR)+(0,-0.04)$);

\node[below] at (arrowL) {i.i.d.};
\node[below] at (arrowR) {non-i.i.d.};

\end{axis}
\end{tikzpicture}
\caption{Relationship between KL divergence and $\Delta$F1 for CIFAR-10 with a base CNN model, following the method of \cite{zhang2024towards}. Vertical error bars show variability across runs. The horizontal arrow highlights the transition from i.i.d.\ (left) to non-i.i.d.\ (right).}
\label{fig:kl-deltaf1}
\end{figure}

\section{Methodology}
\label{sec:methodology}
As formulated in Proposition~\ref{prop:taylor-shift}, prior work based on Newton updates
fails to achieve efficient certified unlearning under distribution shift. The breakdown
arises because (i) the local second-order Taylor expansion provides an inaccurate
approximation of the retrained model, and (ii) the residual gradient must be bounded using
a global Lipschitz constant, which leads to excessive noise injection and severe utility
loss. Our key observation is that non-i.i.d.\ deletions inflate the \emph{global} gradient
and Hessian Lipschitz constants, rendering global Taylor-based bounds vacuous and
degrading the accuracy of one-step Newton approximations.  

To address this, we propose \emph{TR-certified machine unlearning}, 
an iterative certified unlearning framework with dynamically adjusted 
step-size caps, implemented via the trust-region (TR) method.
We continue to use
Newton updates to remove the influence of the forgotten data, but the TR constraint
ensures that each step remains within a region where the local quadratic model is
reliable, preventing error blow-up from biased deletions. TR-certified unlearning resolves
the core issues of prior methods: (1) by making only local smoothness assumptions, each
iteration achieves the same accuracy guarantee as the one-step Newton update in the
no-shift case, yielding a closer approximation to the retrained model as the iterates
converge; and (2) it enables the formulation of a tight upper bound on residual
information, allowing us to calibrate noise without relying on inflated global Lipschitz
constants. We next demonstrate how TR-certified unlearning achieves these two goals.
To start with the method, we first shows how to define TR problem and subproblem under unlearning setting, we then shows how to bound the collective gradient residual.

\paragraph{TR Method on Retrain Model Approximation}
We approximate the retrained model using a second-order Taylor expansion around the 
original minimizer $w^\star$:  
\begin{equation}
\label{eq:tr-model}
m_t(p) \;=\; f(w_t) \;+\; g_t^\top p \;+\; \tfrac{1}{2}\,p^\top H_{t,\lambda}\,p,
\qquad
g_t := \nabla f(w_t), \;\; H_{t,\lambda} := \nabla^2 L_{\sR}(w_t) + \lambda I,
\end{equation}
where $p$ denotes the step (or direction) vector from the current point $w_t$.  
Setting the initialization at the original minimizer $w^\star$, we have the first subproblem:
\begin{equation}
\label{eq:tr-init}
m_0(p) \;=\; f(w^\star) \;+\; g_0^\top p \;+\; \tfrac{1}{2}\,p^\top H_{0,\lambda}\,p,
\qquad
g_0 := \nabla f(w^\star), \;\; H_{0,\lambda} := \nabla^2 L_{\sR}(w^\star) + \lambda I,
\end{equation}
Notice that solving the first subproblem with the objective function of minimizing loss on the retrained model is mathmatically equivelent to one-step Newton update -- which indicates an inaccurate approximation due to large distribution shift. To address this, we adopt a trust-region (TR) method that constrains each Newton update to lie within a trust-region radius. Rather than assuming a global gradient Lipschitz constant, we require only a local Lipschitz condition within the worst-case trust-region ball. Accordingly, the radius must itself be capped to prevent steps from inflating to a global scale. Clipping the maximum trust-region radius is therefore essential to obtain a tight residual-gradient bound and certify unlearning.
We begin with the following assumption:  

\begin{assumption}[Local smoothness within the trust region]
\label{assump:local-smooth}
For each iteration $t$, the per-example loss $\ell(w,x)$ has an $L_t$-Lipschitz continuous gradient in $w$ restricted to the trust region
\[
B(w_t,\bar\Delta_t) := \{\,w\in\mathbb{R}^d : \|w-w_t\|\le \bar\Delta_t\,\},
\]
i.e.,
\[
\|\nabla \ell(w,x) - \nabla \ell(w',x)\|
\;\le\; L_t \,\|w-w'\|,
\qquad \forall\, w,w'\in B(w_t,\bar\Delta_t),\;\; x\in\sR.
\]
where $B(w_t,\bar\Delta_t) := \{\,w\in\mathbb{R}^d : \|w-w_t\|\le \bar\Delta_t\,\}$ 
denotes the trust region around $w_t$ with radius $\bar\Delta_t$. We then define the worst-case local Lipschitz constant over $T$ iterations as
\[
L_{\max}(T) \;:=\; \max_{0 \le t < T} L_t,
\]
which serves as the effective Lipschitz constant in our subsequent upper-bound analysis. 
\end{assumption}

$L_f(t)$ is a \emph{local} Lipschitz constant it only needs to hold within the 
trust region $B(w_t,\bar\Delta_t)$ rather than across the entire parameter space. In practice, for better implementation, we can estimate it by a few power-iteration steps using Hessian–vector product. 

From the Descent Lemma, we obtain a quadratic upper bound ensuring that the
true objective value at the next step is no worse than its quadratic
approximation. Specifically:  
\begin{proposition}[Local Descent Lemma]
\label{prop:local-descent}
Under Assumption~\ref{assump:local-smooth}, for each iteration $t$ the retained objective $f$ satisfies, 
for any $p$ with $w_t,\,w_t+p \in B(w_t,\bar\Delta_t)$,
\[
f(w_t+p) \;\le\; f(w_t) + \nabla f(w_t)^\top p + \tfrac{L_t}{2}\|p\|^2,
\]
where $L_t$ is a Lipschitz constant of $\nabla f$ on $B(w_t,\bar\Delta_t)$. 
For multiple iterations $t=0,\dots,T-1$, the bound holds with 
\[
L_{\max}(T) := \max_{0 \le t < T} L_t.
\]
\end{proposition}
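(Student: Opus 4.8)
The plan is to prove this as the standard smoothness (quadratic upper bound) lemma, adapted so that the Lipschitz property is invoked only inside the trust-region ball. First I would record two preliminary facts. The ball $B(w_t,\bar\Delta_t)$ is convex, so whenever $w_t,\,w_t+p\in B(w_t,\bar\Delta_t)$ the entire segment $\{\,w_t+sp : s\in[0,1]\,\}$ lies in the ball; this is exactly what lets me apply Assumption~\ref{assump:local-smooth} along the whole path of integration. Second, I would pass from the per-example Lipschitz bound to an aggregate one: writing $f$ as the average over $x\in\sR$ of the per-example losses $\ell(\cdot,x)$ (plus, if present, a smooth quadratic regularizer whose $\lambda$-Lipschitz gradient is absorbed into $L_t$), the triangle inequality gives $\|\nabla f(w)-\nabla f(w')\|\le L_t\|w-w'\|$ for all $w,w'\in B(w_t,\bar\Delta_t)$, since averaging a family of $L_t$-Lipschitz-gradient maps preserves the constant. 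Thus $\nabla f$ is $L_t$-Lipschitz on the ball, which is precisely the hypothesis the descent argument needs.

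For the core estimate I would introduce the scalar reduction $\phi(s):=f(w_t+sp)$ on $[0,1]$, which is continuously differentiable with $\phi'(s)=\nabla f(w_t+sp)^\top p$. By the fundamental theorem of calculus,
\[
f(w_t+p)-f(w_t)=\int_0^1 \nabla f(w_t+sp)^\top p\,ds.
\]
Subtracting the linear term, written as $\nabla f(w_t)^\top p=\int_0^1 \nabla f(w_t)^\top p\,ds$, and then applying Cauchy–Schwarz followed by the $L_t$-Lipschitz bound established above yields
\[
f(w_t+p)-f(w_t)-\nabla f(w_t)^\top p
\;\le\; \int_0^1 \|\nabla f(w_t+sp)-\nabla f(w_t)\|\,\|p\|\,ds
\;\le\; \int_0^1 L_t\, s\,\|p\|^2\,ds.
\]
Evaluating $\int_0^1 s\,ds=\tfrac12$ then gives the claimed bound $f(w_t+p)\le f(w_t)+\nabla f(w_t)^\top p+\tfrac{L_t}{2}\|p\|^2$.

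Finally, for the multi-iteration statement I would simply use monotonicity: since $L_t\le L_{\max}(T)$ for every $0\le t<T$, replacing $L_t$ by $L_{\max}(T)$ only enlarges the right-hand side, so the per-iteration bound holds uniformly with the single constant $L_{\max}(T)$. I expect the only genuine subtlety---rather than a true obstacle---to be the two points flagged at the start: verifying that the integration segment never leaves $B(w_t,\bar\Delta_t)$, which relies on convexity of the ball together with both endpoints lying inside it (i.e.\ on the step being feasible for the trust region), and confirming that the per-example constant $L_t$ transfers to the aggregated objective $f$ without inflation. Everything else is the textbook line-integral computation.
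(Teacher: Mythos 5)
Your proof is correct and follows essentially the same route as the paper's: parametrize the segment, apply the fundamental theorem of calculus, add and subtract $\nabla f(w_t)$, and bound the integrand via Cauchy--Schwarz and the local Lipschitz property, with the factor $\tfrac12$ coming from $\int_0^1 s\,ds$. Your two preliminary remarks (convexity of the ball keeping the integration segment feasible, and the transfer of the per-example constant to the averaged objective) are points the paper handles only implicitly, but they do not change the argument.
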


The proof is provided in Appendix~A.2. 
This give us the insight to cap the radius bound by L. Hence, steps larger than $\|g_t\|/L$ cannot be justified by the quadratic model unless 
higher-order terms are tightly controlled, which fail under non-convex setting and limited computation power. We therefore define the clipped trust-region radius with local Lipschitz constant.
The effective trust-region radius at step $t$ is given us:
\begin{equation}
\label{eq:tr-radius}
\bar\Delta_t \;:=\; \min\{\Delta_t,\; \tfrac{\tau}{L_t}\,\|g_t\|\},
\qquad g_t := \nabla f(w_t), \;\; \tau\in(0,1],
\end{equation}
This ensures that every accepted Newton step remains within a region where the quadratic 
Taylor expansion is a reliable surrogate. Note that the radius bound is conditioned on 
the gradient Lipschitz constant rather than the Hessian Lipschitz constant, which yields 
a tighter control under distribution shift.

Starting from the original minimizer $w_0 = w^\star$, at each iteration $k$ we approximate 
the retained objective by the quadratic model. Now, the subproblem is formally defined as:

\begin{definition}[TR subproblem]

The trust-region subproblem is then defined as
\begin{equation}
\label{eq:tr-subproblem}
p_t \;=\; \arg\min_{\|p\|\le \bar\Delta_t} \; m_t(p),
\qquad
w_{k+1} \;=\; w_k + p_k,
\end{equation}
where the effective radius is clipped as(using $\tau = 1$)
\begin{equation}
\label{eq:radius-cap}
\bar\Delta_t \;=\; \min\Bigl\{\,\Delta_t,\;\; \tfrac{\|g_t\|}{L_t}\Bigr\},
\end{equation}
with $g_t := \nabla f(w_t)$ and $L_t$ the local Lipschitz constant of $\nabla f$ on $B(w_t,\bar\Delta_t)$.
\end{definition}
By appliying Newton update, we can solve the subproblem. In practice, we avoid forming the full Hessian by relying on 
Hessian--vector products (HVPs) and using truncated conjugate gradient (CG) or LiSSA 
recursions to compute an approximate step.  

We then follow the standard TR procedure to decide whether the step is accepted and to 
adapt the trust-region radius. Specifically, the model-agreement ratio is defined as
\begin{definition}
    
\begin{equation}
\label{eq:agreement-ratio}
\rho_t \;=\; \frac{f(w_t) - f(w_t + p_t)}{m_t(0) - m_t(p_t)}.
\end{equation}

\end{definition}
If $\rho_t \geq \eta_1$ (for some threshold $\eta_1 \in (0,1)$ we pick), the step is accepted and 
we set $w_{t+1} = w_t + p_t$; otherwise, the step is rejected and the trust-region radius 
is contracted. In Appendix C, we show how radius it then updated after solving the TR-subproblem. 
 
 We iteratively solve the TR subproblems with approximate Newton updates and adapt the step size until convergence, ensuring that the final accepted step lies within the region governed by the local Lipschitz constant. Rather than relying on a single Newton update, the TR framework 
defines a region within which the quadratic model is trusted: the Newton direction 
$-H_t^{-1} g_t$ is preserved as the preferred search direction, but the step size is 
restricted by a trust-region radius to ensure a valid local gradient smoothness assumption. Consequently, our method achieves a closer approximation to the retrained model than a one-step Newton update under distribution shift, while matching the performance of the one-step Newton update when no shift is present.

\paragraph{Residual Upper Bound Formulation}
We further formulate the residual gradient bound in this section to achieve certified
unlearning. With a strong regularizer $\lambda I$, and letting 
$\|H\|$ denote the spectral norm upper bound on the Hessian of the retained dataset
(which we estimate in practice via power iteration), we obtain
\begin{equation}
\label{eq:hessian-regularized}
\|H_{t,\lambda}\| \;\le\; \|H_t\| + \lambda,
\qquad
H_{t,\lambda} := \nabla^2 L_{\sR}(w_t) + \lambda I,
\end{equation}
where $H_t := \nabla^2 L_{\sR}(w_t)$ is the Hessian at the current iterate $w_t$,
and $H_{t,\lambda}$ is the damped (regularized) Hessian.
Let $w_{t+1}=w_t+p_t$ with
$\|p_t\|\le \bar\Delta_t$ (the clipped trust-region radius) and define
$g_t := \nabla f(w_t)$. Then we can upper bound the gradient of the retained objective at any step $t$ by:

\begin{proposition}[Pre-run gradient bound for TR]
\label{prop:prerun-grad}
Assume Assumption~\ref{assump:local-smooth} holds on
$B(w_s,\bar\Delta_s)$ for $s=0,\dots,t-1$.
Then, for any $t\ge 1$,
\begin{equation}
\label{eq:prerun-Ut}
\|g_t\|
\;\le\;
\|g_0\| \;+\; \sum_{s=0}^{t-1} L_s \,\|p_s\|
\;\le\;
\|g_0\| \;+\; L_{\max}(t)\, \sum_{s=0}^{t-1} \bar\Delta_s
\;\le\;
\|g_0\| \;+\; L_{\max}(T)\, \sum_{s=0}^{t-1} \bar\Delta_s \;,
\end{equation}
where $\|g_t\| := \|\nabla f(w_t)\|$ and each $L_s$ is a Lipschitz constant of
$\nabla f$ on the trust region $B(w_s,\bar\Delta_s)$.
\end{proposition}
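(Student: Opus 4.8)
The plan is to bound the running gradient norm $\|g_t\|$ by comparing it to the initial gradient $\|g_0\|$ through a telescoping argument, controlling each consecutive increment with the local Lipschitz property granted by Assumption~\ref{assump:local-smooth}. The three displayed inequalities then separate cleanly into (a) a telescoping plus triangle-inequality bound, (b) a uniformization of the per-step Lipschitz constants and step lengths, and (c) a monotonicity-of-maximum observation.

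First I would establish the one-step increment bound. Because the iterates satisfy $w_{s+1}=w_s+p_s$ with $\|p_s\|\le\bar\Delta_s$, both $w_s$ and $w_{s+1}$ lie in the trust-region ball $B(w_s,\bar\Delta_s)$: the former trivially, and the latter because $\|w_{s+1}-w_s\|=\|p_s\|\le\bar\Delta_s$. Assumption~\ref{assump:local-smooth} (equivalently, the Lipschitz statement recorded in Proposition~\ref{prop:local-descent}) then guarantees that $\nabla f$ is $L_s$-Lipschitz on this ball; here I would note that since $f$ is the average of the per-example losses whose gradients are each $L_s$-Lipschitz, the aggregate gradient inherits the same constant. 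Consequently
\[
\|g_{s+1}-g_s\| \;=\; \|\nabla f(w_s+p_s)-\nabla f(w_s)\| \;\le\; L_s\,\|p_s\|.
\]

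Next I would telescope: writing $g_t-g_0=\sum_{s=0}^{t-1}(g_{s+1}-g_s)$ and applying the triangle inequality followed by the per-step bound yields $\|g_t-g_0\|\le\sum_{s=0}^{t-1}L_s\|p_s\|$, and one more triangle inequality $\|g_t\|\le\|g_0\|+\|g_t-g_0\|$ produces the first inequality. For the second inequality I would invoke the trust-region constraint $\|p_s\|\le\bar\Delta_s$ together with the definition $L_{\max}(t)=\max_{0\le s<t}L_s$, so that each summand obeys $L_s\|p_s\|\le L_{\max}(t)\,\bar\Delta_s$; summing over $s$ gives the middle term. The third inequality is immediate from $t\le T$, since taking the maximum over the larger index set $\{0,\dots,T-1\}$ can only increase it, i.e.\ $L_{\max}(t)\le L_{\max}(T)$.

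I expect no serious obstacle here; the only point requiring care is checking that the hypotheses of Assumption~\ref{assump:local-smooth} genuinely apply at each step---specifically that both consecutive iterates remain inside the \emph{same} ball $B(w_s,\bar\Delta_s)$ on which the local constant $L_s$ is declared, and that the per-example Lipschitz bound transfers to the averaged objective $f$. Once these are verified the argument is a routine telescoping estimate, and the \emph{pre-run} character of the bound follows because every quantity on the right-hand side---$\|g_0\|$, the radii $\bar\Delta_s$, and $L_{\max}(T)$---is available before the iterations are executed.
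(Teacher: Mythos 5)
Your proof is correct: the one-step bound $\|g_{s+1}-g_s\|\le L_s\|p_s\|$ (valid because $w_s$ and $w_{s+1}=w_s+p_s$ both lie in $B(w_s,\bar\Delta_s)$), followed by telescoping, the trust-region constraint $\|p_s\|\le\bar\Delta_s$, and the monotonicity $L_{\max}(t)\le L_{\max}(T)$, is exactly the argument this statement calls for; the paper itself states the proposition without printing a proof, and what it does provide (the integral-form Lipschitz estimate in the proof of Proposition~\ref{prop:local-descent}) is consistent with your route. Your added care about the per-example-to-aggregate transfer of the Lipschitz constant and about both iterates lying in the same ball addresses the only points where the argument could have slipped, so there is nothing to correct.
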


Then, by the Descent Lemma, we obtain a lower bound on the true predicted reduction:

\begin{theorem}
Let $\varepsilon_{\mathrm{iHVP}}(T,\rho)$ denote the operator-norm error bound on
\[
\bigl\|\tilde{H}_{T,\lambda}^{-1} - H_\lambda^{-1}\bigr\|,
\]
derived from the LiSSA analysis. Suppose the local gradient and curvature estimation errors are bounded by $\varepsilon_g$ and $\varepsilon_H$, respectively, where $\varepsilon_H$ captures the error in the Hessian--vector product (HVP). Then, for any iteration $t$, the per-step predicted reduction satisfies
\begin{equation}
\label{eq:pr-lower}
\mathrm{PR}_t \;=\; 
\max\Biggl\{\,0,\;
\underbrace{\mathrm{PR}^{\mathrm{est}}_t}_{\text{model est.}}
- \underbrace{\varepsilon_g \,\bar\Delta_t}_{\text{grad est.}}
- \underbrace{\tfrac{1}{2}\,\varepsilon_H \,\bar\Delta_t^2}_{\text{HVP est.}}
- \underbrace{\Bigl(\,\varepsilon_{\mathrm{iHVP}} U_t^2 
+ (\|H\|+\lambda)\,\bar\Delta_t \,\varepsilon_{\mathrm{iHVP}} U_t\Bigr)}_{\text{iHVP step error}}
\;\Biggr\},
\end{equation}
where $U_t := \|g_0\| + L_{\max}(t)\sum_{s=0}^{t-1}\bar\Delta_s$ is the pre-run gradient bound.
\end{theorem}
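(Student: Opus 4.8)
The plan is to regard $\mathrm{PR}^{\mathrm{est}}_t$ as the model-estimated predicted reduction obtained from the quadratic model $m_t(0)-m_t(p_t) = -g_t^\top p_t - \tfrac12 p_t^\top H_{t,\lambda}p_t$, and to certify that the reduction we can actually guarantee differs from it by no more than the four penalty terms displayed. First I would fix the idealized Newton step $p_t^\star = -H_{t,\lambda}^{-1}g_t$ computed from the exact gradient and regularized Hessian, and contrast it with the step we compute in practice, $p_t = -\tilde H_{t,\lambda}^{-1}\tilde g_t$, returned by the LiSSA/CG recursion using the estimated gradient $\tilde g_t$ and the approximate inverse-Hessian action. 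The entire argument then reduces to tracking how three independent error sources perturb the quadratic model reduction: the gradient estimate ($\|\tilde g_t-g_t\|\le \varepsilon_g$), the curvature/HVP estimate ($\|\tilde H_{t,\lambda}-H_{t,\lambda}\|\le \varepsilon_H$), and the inverse-Hessian approximation ($\|\tilde H_{T,\lambda}^{-1}-H_\lambda^{-1}\|\le \varepsilon_{\mathrm{iHVP}}$). Throughout I would use $\|p_t\|\le\bar\Delta_t$ from the clipped radius (Eq.~\ref{eq:radius-cap}) and $\|g_t\|\le U_t$ from Proposition~\ref{prop:prerun-grad} to convert operator-norm errors into the stated scalar penalties.

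The two direct channels are routine. Replacing $g_t$ by $\tilde g_t$ in the linear term perturbs the reduction by at most $|(\tilde g_t-g_t)^\top p_t|\le \varepsilon_g\|p_t\|\le \varepsilon_g\bar\Delta_t$, which is the grad-est.\ penalty; replacing $H_{t,\lambda}$ by $\tilde H_{t,\lambda}$ in the quadratic term perturbs it by at most $\tfrac12|p_t^\top(\tilde H_{t,\lambda}-H_{t,\lambda})p_t|\le \tfrac12\varepsilon_H\|p_t\|^2\le \tfrac12\varepsilon_H\bar\Delta_t^2$, which is the HVP-est.\ penalty. Both follow from Cauchy--Schwarz together with the radius cap.

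The main obstacle is propagating the inverse-Hessian error through the step and into both halves of the quadratic model. Writing $p_t-p_t^\star = -(\tilde H_{t,\lambda}^{-1}-H_{t,\lambda}^{-1})\tilde g_t$ and absorbing the iterate index into the uniform LiSSA bound $\varepsilon_{\mathrm{iHVP}}$, the step error obeys $\|p_t-p_t^\star\|\le \varepsilon_{\mathrm{iHVP}}\|\tilde g_t\|\le \varepsilon_{\mathrm{iHVP}}U_t$ after invoking the pre-run bound. This perturbation enters the reduction through a linear channel, $|g_t^\top(p_t-p_t^\star)|\le \|g_t\|\,\|p_t-p_t^\star\|\le \varepsilon_{\mathrm{iHVP}}U_t^2$, and a quadratic cross channel, $|p_t^\top H_{t,\lambda}(p_t-p_t^\star)|\le \|H_{t,\lambda}\|\,\|p_t\|\,\|p_t-p_t^\star\|\le (\|H\|+\lambda)\,\bar\Delta_t\,\varepsilon_{\mathrm{iHVP}}U_t$, where the regularized spectral bound comes from Eq.~\ref{eq:hessian-regularized}. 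Summing these two yields exactly the bracketed iHVP step-error penalty.

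Collecting all four penalties by the triangle inequality gives $\mathrm{PR}^{\mathrm{est}}_t - \varepsilon_g\bar\Delta_t - \tfrac12\varepsilon_H\bar\Delta_t^2 - \bigl(\varepsilon_{\mathrm{iHVP}}U_t^2 + (\|H\|+\lambda)\bar\Delta_t\varepsilon_{\mathrm{iHVP}}U_t\bigr)$ as a lower bound on the true achieved reduction; since any certified predicted reduction must be non-negative to justify accepting the step and calibrating noise against it, I would floor the right-hand side at zero to obtain the displayed $\max\{0,\cdot\}$. The genuinely delicate point is keeping the second-order remainder $(p_t-p_t^\star)^\top H_{t,\lambda}(p_t-p_t^\star)$, which scales like $\varepsilon_{\mathrm{iHVP}}^2U_t^2$, consistent with the first-order LiSSA error: I expect to argue this purely second-order piece is dominated by the retained first-order terms (or absorbed into $\varepsilon_{\mathrm{iHVP}}$), so that only the two first-order iHVP contributions survive in the final bound.
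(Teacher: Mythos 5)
The paper never actually proves this theorem: Appendix~A contains proofs only of Proposition~\ref{prop:biased-deletion-simplified}, Proposition~\ref{prop:local-descent}, and Corollary~\ref{cor:tr-bound}, and no derivation of \eqref{eq:pr-lower} appears anywhere in the manuscript. So there is no paper proof to compare against; what follows evaluates your reconstruction on its own terms.

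Your decomposition is the natural one and it reproduces the four labeled penalties term by term: Cauchy--Schwarz with the radius cap $\|p_t\|\le\bar\Delta_t$ gives the $\varepsilon_g\bar\Delta_t$ and $\tfrac12\varepsilon_H\bar\Delta_t^2$ penalties; the step perturbation $\|p_t-p_t^\star\|\le\varepsilon_{\mathrm{iHVP}}\|\tilde g_t\|$ combined with $\|g_t\|\le U_t$ from Proposition~\ref{prop:prerun-grad} and the spectral bound \eqref{eq:hessian-regularized} gives the two iHVP terms. Your framing --- that $\mathrm{PR}^{\mathrm{est}}_t$ is evaluated at the ideal step $p_t^\star=-H_{t,\lambda}^{-1}g_t$ while the realized reduction is evaluated at the computed step $p_t$ --- is the only reading under which the iHVP penalty enters at all, and it is almost certainly what the authors intend. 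Two loose ends remain, one of which you flag yourself. First, the second-order remainder: if you instead use the symmetric identity $\tfrac12 p_t^\top H p_t-\tfrac12 (p_t^\star)^\top H p_t^\star=\tfrac12(p_t+p_t^\star)^\top H(p_t-p_t^\star)$, the $\varepsilon_{\mathrm{iHVP}}^2$ term disappears entirely and you land on a bound of the stated form (with $\max\{\|p_t\|,\|p_t^\star\|\}$ in place of $\bar\Delta_t$), so this is fixable rather than a gap. Second, you bound $\|\tilde g_t\|\le U_t$, but $U_t$ bounds the true gradient, so strictly you get $U_t+\varepsilon_g$ and a second-order cross term $\varepsilon_{\mathrm{iHVP}}\varepsilon_g U_t$; this is a cosmetic slippage of the same order as the theorem's own imprecision (it writes $\varepsilon_{\mathrm{iHVP}}(T,\rho)$ with a fixed index $T$ but applies it at every $t$, and states an equality with a $\max\{0,\cdot\}$ where only a certified lower bound can be meant). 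Your proposal is as rigorous as the statement permits.
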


\begin{assumption}[Local curvature floor via damping]
\label{assump:local-curvature}
Let 
\[
f(w) \;=\; L_{\sR}(w) + \tfrac{\lambda}{2}\|w\|^2.
\] 
We assume there exists $\mu>0$ such that
\[
\nabla^2 f(w) \;=\; \nabla^2 L_{\sR}(w) + \lambda I
\succeq \mu I,
\qquad \forall\, w \in \sS,
\]
where $\sS$ denotes the region explored by the trust-region iterates $\{w_t\}_{t=0}^{T}$.  
Equivalently,
\[
\lambda_{\min}\!\big(\nabla^2 L_{\sR}(w)\big) + \lambda \;\ge\; \mu,
\qquad \forall\, w \in \sS.
\]
\end{assumption}

We formally obtain the approximation bound:
\begin{corollary}
\label{cor:tr-bound}
Given an approximate unlearned model $\tilde{w}$ obtained after $t$ trust-region iterations 
and the retained minimizer $\widehat{w}$, we can bound their squared distance by

\[
{\;
\|\tilde{w}-\widehat{w}\|
\;\le\;
\sqrt{\frac{2}{\mu}}\,
\Bigl(1-\frac{\eta_1\,\kappa\,\tau\,\mu}{L_{\max}(T)}\Bigr)^{\!T/2}
\sqrt{\,f(w_0)-f(\widehat{w})\,}\; }
\]

where 
$\eta_1 \in (0,1]$ is the acceptance threshold of the trust-region step, 
$\kappa \in (0,1]$ is the solver accuracy factor for approximately solving the TR subproblem, 
and $\tau \in (0,1]$ is the clipping factor that scales the effective trust-region radius. 

\end{corollary}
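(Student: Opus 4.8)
The plan is to prove the bound in two stages: first show that the function-value gap $f(w_t) - f(\widehat{w})$ contracts geometrically by a factor $\bigl(1 - \tfrac{\eta_1\kappa\tau\mu}{L_{\max}(T)}\bigr)$ at every accepted iteration, and then convert this gap into the stated iterate distance using the $\mu$-strong convexity supplied by Assumption~\ref{assump:local-curvature}. Concretely, the curvature floor $\nabla^2 f \succeq \mu I$ on $\sS$ makes $f$ $\mu$-strongly convex along the iterates, which yields both the Polyak--Lojasiewicz (PL) inequality $\tfrac12\|g_t\|^2 \ge \mu\,(f(w_t)-f(\widehat{w}))$ and the quadratic lower bound $\tfrac{\mu}{2}\|\tilde{w}-\widehat{w}\|^2 \le f(\tilde{w})-f(\widehat{w})$. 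The target per-step recursion is $f(w_{t+1})-f(\widehat{w}) \le \bigl(1-\tfrac{\eta_1\kappa\tau\mu}{L_{\max}(T)}\bigr)\bigl(f(w_t)-f(\widehat{w})\bigr)$, so everything reduces to establishing a uniform lower bound on the actual decrease per accepted step.

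For the per-step contraction I would chain four inequalities. (i) A Cauchy-point / sufficient-decrease estimate for the TR subproblem: any approximate solver attaining a fraction $\kappa$ of the model decrease satisfies $m_t(0)-m_t(p_t) \ge \tfrac{\kappa}{2}\,\|g_t\|\,\min\bigl\{\bar\Delta_t,\ \|g_t\|/\|H_{t,\lambda}\|\bigr\}$. (ii) The radius cap \eqref{eq:radius-cap}, $\bar\Delta_t = \min\{\Delta_t, \tau\|g_t\|/L_t\}$, which in the relevant regime makes the gradient-proportional term bind, so $\min\bigl\{\bar\Delta_t,\ \|g_t\|/\|H_{t,\lambda}\|\bigr\} = \tau\|g_t\|/L_t \ge \tau\|g_t\|/L_{\max}(T)$; this turns the model decrease into $m_t(0)-m_t(p_t) \ge \tfrac{\kappa\tau}{2L_{\max}(T)}\,\|g_t\|^2$. (iii) The acceptance rule for the agreement ratio: an accepted step has $\rho_t \ge \eta_1$, hence $f(w_t)-f(w_{t+1}) = \rho_t\,(m_t(0)-m_t(p_t)) \ge \eta_1\,(m_t(0)-m_t(p_t))$. (iv) The PL inequality, which replaces $\|g_t\|^2$ by $2\mu\,(f(w_t)-f(\widehat{w}))$. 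Combining these, the factor $\tfrac12$ from the Cauchy estimate cancels the $2$ from PL, giving $f(w_t)-f(w_{t+1}) \ge \tfrac{\eta_1\kappa\tau\mu}{L_{\max}(T)}\,(f(w_t)-f(\widehat{w}))$, which is exactly the target recursion.

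Unrolling this recursion over $T$ iterations gives $f(\tilde{w})-f(\widehat{w}) \le \bigl(1-\tfrac{\eta_1\kappa\tau\mu}{L_{\max}(T)}\bigr)^{T}\bigl(f(w_0)-f(\widehat{w})\bigr)$ with $w_0=w^\star$ and $\tilde{w}=w_T$. Substituting into the strong-convexity lower bound $\tfrac{\mu}{2}\|\tilde{w}-\widehat{w}\|^2 \le f(\tilde{w})-f(\widehat{w})$ and taking square roots yields $\|\tilde{w}-\widehat{w}\| \le \sqrt{2/\mu}\,\bigl(1-\tfrac{\eta_1\kappa\tau\mu}{L_{\max}(T)}\bigr)^{T/2}\sqrt{f(w_0)-f(\widehat{w})}$, which is the claimed bound.

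I expect the main obstacle to be step (ii): justifying that the $\min$ in both the radius cap and the Cauchy estimate resolves to the gradient-proportional scale $\tau\|g_t\|/L_t$ uniformly across accepted iterations. This requires arguing that the adaptive radius $\Delta_t$ does not collapse below the cap, so $\bar\Delta_t = \tau\|g_t\|/L_t$, and that the local gradient-Lipschitz constant dominates the damped curvature, $L_t \ge \|H_{t,\lambda}\|$, so that $\|g_t\|/\|H_{t,\lambda}\| \ge \tau\|g_t\|/L_t$; both follow from Assumption~\ref{assump:local-smooth} together with the radius-update mechanism of Appendix~C, but they must be verified at every successful step, and if a step is ever radius-limited one must fall back to a secondary estimate and absorb the deficit into $\kappa$. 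A more routine point is confirming that all iterates $w_t$ remain in the region $\sS$ where Assumption~\ref{assump:local-curvature} supplies $\mu$, so that the PL inequality holds throughout; this is guaranteed since each accepted step is monotone decreasing in $f$ and stays within the trust region by construction.
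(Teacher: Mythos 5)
Your proposal follows essentially the same route as the paper's proof: Cauchy decrease under the clipped radius, a $\kappa$-fraction solver guarantee, the $\eta_1$ acceptance threshold, and the PL/strong-convexity inequalities to convert the geometric contraction of $f(w_t)-f(\widehat{w})$ into the iterate distance. The obstacle you flag in step (ii) --- that the $\min$ in the Cauchy estimate must resolve to $\tau\|g_t\|/L_t$, which requires both $\Delta_t \ge \tau\|g_t\|/L_t$ and $L_t \ge \tau\|H_{t,\lambda}\|$ --- is a real gap that the paper's own proof also glosses over by asserting the decrease directly from the radius cap, so your more cautious treatment is if anything an improvement.
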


The proof can be find in Appendix A.3. In practice, if the Hessian matrix remains ill-conditioned and we revert to the
Cauchy fallback, then each per-step reduction satisfies a tighter bound. Consequently, we have a tighter approximation bound at the cost of higher computation cost. 

\section{Experiments}
\label{sec:experiments}
We design experiments to evaluate the performance of our certified unlearning method under \emph{non-i.i.d.} unlearning sets. We compare against the state-of-the-art certified unlearning methods developed for computer vision models. Our evaluation focuses exclusively on \emph{post-run} certified unlearning methods, since \emph{pre-run} approaches require modifying or pre-processing the model during training and therefore fall outside the scope of this work.

\subsection{Dataset and experiment settings}
To evaluate the practical effectiveness of our approach, we conduct experiments on MNIST and CIFAR-10. The datasets are split into training/testing with ratios of 6:1 and 5:1, respectively. We use a 3-layer MLP for MNIST and a 3-layer AllCNN for CIFAR-10. To ensure consistency, base models are trained once and reused across all methods. Existing certified unlearning approaches are implemented with their recommended settings, while detailed hyperparameters for TR-certified unlearning are given in Appendix~D.

\paragraph{Non-i.i.d.\ Sampling Method and Distribution Shift Measurement} 
We simulate realistic distribution shifts in unlearning by constructing non-i.i.d.\ deletion sets that induce systematic mismatches between the original and retrained models. Implementation details are provided in Appendix~D.2. 

\subsection{Evaluation matrix}
We evaluate certified unlearning along two major aspects with DP guarantees: effectiveness and utility generalization. Under distribution shift, these require reconsideration, so we redesign the metrics to better reflect certified unlearning performance.

\paragraph{Utility Generalization}
Under distribution shift, the utility scores on the test set can diverge between the original model $w^{\ast}$ and the retrained model $\widehat{w}$. A common practice in unlearning evaluation is to use the difference between the unlearned model $\widetilde{w}$ and the original model $w^{\ast}$, i.e., $\Delta_{\text{F1}}(\widetilde{w}, w^{\ast})$, as an indicator of the unlearning method’s performance. We argue that this comparison is misleading and provides a poor indicator under distribution shift. Instead, we propose that the unlearned model should be evaluated relative to the retrained model, i.e., $\Delta_{\text{F1}}(\widetilde{w}, \widehat{w})$, which serves as the appropriate reference point. As discussed in Section~2.1, differential privacy (DP)--based certified unlearning guarantees parameter indistinguishability, $\widetilde{w} \approx_{\epsilon,\delta} \widehat{w}$, which in turn enforces similar utility performance. To measure utility score, we use the micro F1-score of the predictions over the test set $\mathcal{D}_t$ to indicate the utility of the target model. In our results, we report both the difference $\Delta_{\text{F1}}(\widetilde{w}, \widehat{w})$ between the F1 scores of the unlearned and retrained models, as well as the absolute F1 score of the unlearned model.
\paragraph{Effectiveness}
To evaluate the effectiveness of certified unlearning, we conduct U-MIA on both the unlearned model and the retrained model. Membership inference attacks (MIA) aim to determine whether a given data point was used during training by probing the target model. However, under distribution shift they give misleading results: high AUC may reflect either true forgetting of $F$ or spurious drift from distribution shift. Thus, a high AUC score can arise from either successful forgetting of $F$ (the intended effect) or spurious drift and class-mix artifacts introduced by distribution shift. Consequently, classic MIA fails to disentangle true unlearning from distributional effects. To isolate unlearning effectiveness, we instead compare $F$ against truly unseen data ($U$) under the same unlearned model. We adapt the membership inference procedure to distinguish between forget data and genuinely unseen data, following the approach of ~\cite{hayes2025inexact}.  We report the AUC of U-MIA, where values closer to $50\%$ indicate stronger unlearning. Results are reported for both unlearned and retrained models.

\begin{table}[t]
\centering
\caption{F1 and test loss under posterior KL = 0.104. Lower $\Delta$F1 and $\Delta$Loss indicate better utility retention after unlearning.}
\label{tab:f1-loss-compact}
\setlength{\tabcolsep}{4pt}
\scriptsize
\begin{tabular}{lcccccc}
\toprule
& \multicolumn{3}{c}{MNIST \& MLP} & \multicolumn{3}{c}{CIFAR-10 \& AllCNN} \\
\cmidrule(lr){2-4}\cmidrule(lr){5-7}
Method 
& Retrain (F1/Loss) & Unlearned (F1/Loss) & $\Delta$F1 / $\Delta$Loss 
& Retrain (F1/Loss) & Unlearned (F1/Loss) & $\Delta$F1 / $\Delta$Loss \\
\midrule
Guo et al.\ (2020)
& 96.29 / 0.1042  & 92.35 / 0.2184  & 3.94 / 0.1142
& NaN / NaN & NaN / NaN & NaN / NaN \\
Zhang et al.\ (2024)   
& 96.77 / 0.1046 & 92.59 / 0.2399 & 4.18 / 0.1353 
& 78.43 / 0.6423 & 69.15 / 0.9099 & 9.28 / 0.2676 \\
% Koloskova et al.\ (2025)
% & -- / -- & -- / -- & -- / -- 
% & -- / -- & -- / -- & -- / -- \\
\textbf{TR-Certified (Ours)} 
& 96.02 / 0.1037 & 94.65 / 0.1811 & \textbf{1.37 / 0.0774 }
& 76.79 / 0.6745 & 73.64 / 0.7925 & \textbf{3.15} / \textbf{0.1180} \\
\bottomrule
\end{tabular}
\end{table}

\begin{table}[t]
\centering
\caption{U-MIA accuracy (\%) of retrained vs.\ unlearned models under posterior KL = 0.104. Lower $\Delta$U-MIA indicates less residual leakage.}
\label{tab:mia-results}
\setlength{\tabcolsep}{4pt}
\scriptsize
\begin{tabular}{lcccccc}
\toprule
& \multicolumn{3}{c}{MNIST \& MLP} & \multicolumn{3}{c}{CIFAR-10 \& AllCNN} \\
\cmidrule(lr){2-4} \cmidrule(lr){5-7}
Method & Retrain (U-MIA) & Unlearned (U-MIA) & $\Delta$U-MIA 
       & Retrain (U-MIA) & Unlearned (U-MIA) & $\Delta$U-MIA \\
\midrule
Zhang et al.\ (2024)   &  52.14 & 51.25 & 0.89 & 56.00 & 53.27 & 2.73 \\
% Koloskova et al.\ (2025) & -- & -- & -- & -- & -- & -- \\
\textbf{TR-Certified (Ours)} & 52.11 & 51.32 & \textbf{0.79} & 55.83 & 54.43 & \textbf{1.4} \\
\bottomrule
\end{tabular}
\end{table}

\begin{figure}[t]
\centering

% --- Top subplot (ours vs Zhang et al.) ---
\begin{subfigure}{0.9\linewidth}
\centering
\begin{tikzpicture}
\begin{axis}[
    width=0.7\linewidth,
    height=0.36\linewidth,
    xlabel={KL divergence},
    ylabel={$\Delta$F1 (\%)},
    xmin=0.00, xmax=0.10,
    xtick={0.00,0.02,0.04,0.06,0.08,0.10},
    ymajorgrids=true,
    xmajorgrids=true,
    grid style={dashed,gray!35},
    tick style={black!70},
    tick label style={font=\scriptsize},
    label style={font=\scriptsize},
    legend style={font=\tiny, row sep=1pt, column sep=3pt, at={(0.02,0.98)}, anchor=north west, fill=white, draw=gray!50},
    error bars/y dir=both,
    error bars/y explicit,
    every error bar/.append style={line width=0.8pt},
    error bars/error mark options={mark=|, mark size=2pt, line width=0.8pt},
    every axis plot/.append style={line width=1pt, mark size=0.7pt},
    clip=false
]

\addplot+[smooth, mark=*, color=blue]
coordinates {
    (0.00, 4.1933) +- (0,0.0313)
    (0.01, 6.1933) +- (0,0.6170)
    (0.02, 6.1833) +- (0,0.9630)
    (0.03, 5.5767) +- (0,1.5630)
    (0.04, 5.1800) +- (0,1.6950)
    (0.05, 5.3300) +- (0,1.4400)
    (0.06, 6.6700) +- (0,2.4300)
    (0.07, 7.5333) +- (0,2.3170)
    (0.08, 6.1933) +- (0,0.6170)
    (0.09, 6.5067) +- (0,2.7670)
};
\addlegendentry{Ours}

\addplot+[smooth, densely dashed, mark=diamond*, color=purple]
coordinates {
    (0.000008, 4.19) +- (0,1.2)
    (0.01, 6.66) +- (0,0.9)
    (0.02, 6.66) +- (0,1.8)
    (0.03, 6.76) +- (0,1.4)
    (0.04, 6.91) +- (0,0.7)
    (0.05, 7.45) +- (0,1.6)
    (0.06, 8.55) +- (0,1.0)
    (0.07, 9.28) +- (0,2.0)
    (0.08, 9.43) +- (0,0.6)
    (0.09, 9.76) +- (0,1.9)
    (0.10, 10.15) +- (0,1.3)
};
\addlegendentry{\cite{zhang2024towards}}

\coordinate (arrowL) at (rel axis cs:0,-0.12);
\coordinate (arrowR) at (rel axis cs:1,-0.12);
\draw[-{Latex[length=3pt,width=4pt]}, ultra thick] (arrowL) -- (arrowR);
\draw[very thick] ($(arrowL)+(0,0.04)$) -- ($(arrowL)+(0,-0.04)$);
\draw[very thick] ($(arrowR)+(0,0.04)$) -- ($(arrowR)+(0,-0.04)$);
\node[below] at (arrowL) {i.i.d.};
\node[below] at (arrowR) {non-i.i.d.};

\end{axis}
\end{tikzpicture}
\caption{Comparison with \cite{zhang2024towards}.}
\label{fig:deltaf1-comparison}
\end{subfigure}

\vspace{0.5em}

% --- Bottom subplot (ours retrain vs unlearned) ---
\begin{subfigure}{0.9\linewidth}
\centering
\begin{tikzpicture}
\begin{axis}[
    width=0.7\linewidth,
    height=0.36\linewidth,
    xlabel={KL divergence},
    ylabel={Score (\%)},
    xmin=0.00, xmax=0.09,
    xtick={0.00,0.02,0.04,0.06,0.08},
    ymajorgrids=true,
    xmajorgrids=true,
    grid style={dashed,gray!35},
    tick style={black!70},
    tick label style={font=\scriptsize},
    label style={font=\scriptsize},
    legend style={font=\tiny, row sep=1pt, column sep=3pt, at={(0.02,0.98)}, anchor=north west, fill=white, draw=gray!50},
    error bars/y dir=both,
    error bars/y explicit,
    every error bar/.append style={line width=0.8pt},
    error bars/error mark options={mark=|, mark size=2pt, line width=0.8pt},
    every axis plot/.append style={line width=1pt, mark size=0.7pt},
    clip=false
]

\addplot+[smooth, mark=*, color=blue]
coordinates {
    (0.00, 4.1933) +- (0,0.0313)
    (0.01, 6.1933) +- (0,0.6170)
    (0.02, 6.1833) +- (0,0.9630)
    (0.03, 5.5767) +- (0,1.5630)
    (0.04, 5.1800) +- (0,1.6950)
    (0.05, 5.3300) +- (0,1.4400)
    (0.06, 6.6700) +- (0,2.4300)
    (0.07, 7.5333) +- (0,2.3170)
    (0.08, 6.1933) +- (0,0.6170)
    (0.09, 6.5067) +- (0,2.7670)
};
\addlegendentry{$\Delta$F1}

\addplot+[smooth, dashed, mark=square*, color=red]
coordinates {
    (0.00, 81.9800) +- (0,0.0580)
    (0.01, 80.1600) +- (0,0.5800)
    (0.02, 79.8733) +- (0,0.8530)
    (0.03, 78.9367) +- (0,1.5000)
    (0.04, 78.6367) +- (0,0.8050)
    (0.05, 77.1033) +- (0,0.7100)
    (0.06, 76.3733) +- (0,2.4270)
    (0.07, 75.6300) +- (0,2.3950)
    (0.08, 74.1600) +- (0,0.5800)
    (0.09, 73.8533) +- (0,1.4650)
};
\addlegendentry{Retrain F1}

\addplot+[smooth, densely dotted, mark=triangle*, color=green!60!black]
coordinates {
    (0.01, 73.967) +- (0,0.85)
    (0.02, 73.690) +- (0,0.00)
    (0.03, 73.360) +- (0,0.65)
    (0.04, 73.457) +- (0,0.54)
    (0.05, 71.773) +- (0,0.56)
    (0.06, 69.703) +- (0,0.025)
    (0.07, 68.090) +- (0,0.72)
    (0.08, 67.967) +- (0,0.85)
    (0.09, 67.347) +- (0,1.05)
};
\addlegendentry{Unlearned F1}

\coordinate (arrowL) at (rel axis cs:0,-0.12);
\coordinate (arrowR) at (rel axis cs:1,-0.12);
\draw[-{Latex[length=3pt,width=4pt]}, ultra thick] (arrowL) -- (arrowR);
\draw[very thick] ($(arrowL)+(0,0.04)$) -- ($(arrowL)+(0,-0.04)$);
\draw[very thick] ($(arrowR)+(0,0.04)$) -- ($(arrowR)+(0,-0.04)$);
\node[below] at (arrowL) {i.i.d.};
\node[below] at (arrowR) {non-i.i.d.};

\end{axis}
\end{tikzpicture}
\caption{Retrain vs.\ unlearned.}
\label{fig:deltaf1-retrain}
\end{subfigure}

\caption{Performance under distribution shift: (a) $\Delta$F1 comparison with \cite{zhang2024towards}, (b) retrain vs.\ unlearned F1. Error bars show min--max ranges.}
\label{fig:topdown}
\end{figure}
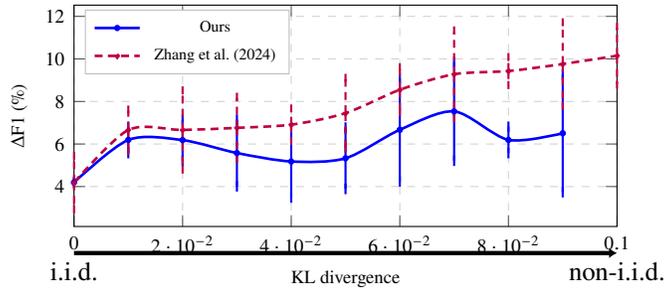
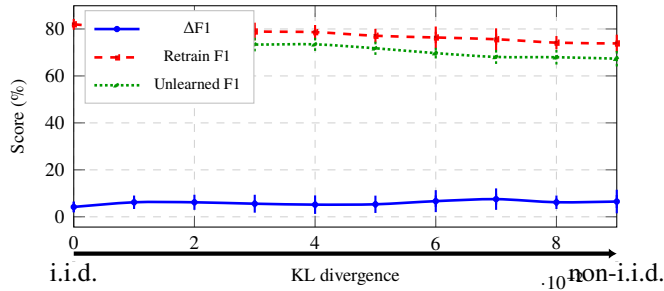

\subsection{Analysis and findings}
\paragraph{Unlearning Utility Study}In Table~\ref{tab:f1-loss-compact}, we report the KL divergence, F1 score, and loss 
difference between the state-of-the-art method and ours on the test set. 
Because the datasets differ in size, we adjust the unlearning ratio to keep the KL divergence 
between the training sets of the unlearned and retrained models approximately constant 
(difference $<0.001$). To confirm that the unlearning ratio itself does not significantly 
affect model performance, we provide a separate study in Appendix~D.1. 
Using the retrained model as the baseline, we define 
$\Delta \mathrm{F1} = \mathrm{Retrain\_F1} - \mathrm{Unlearn\_F1}$. 
Note that $-\Delta \mathrm{F1} = \mathrm{Unlearn\_F1} - \mathrm{Retrain\_F1}$ 
is not equivalent to $\Delta \mathrm{F1}$. 
A positive $-\Delta \mathrm{F1}$ typically indicates \emph{under-noise}, 
corresponding to failed certified unlearning. 

Under the DP guarantee, a larger $\Delta$F1 on the test set $\mathcal{D}_t$ 
reflects excessive noise injection. 
From Table~\ref{tab:f1-loss-compact}, we observe that the state-of-the-art method 
yields a $\Delta$F1 roughly three times larger than ours, as expected. 
We also note, however, that the retrained models differ in absolute F1 score and loss, 
which may partly account for the observed gap. 
To further validate our findings, Figure~\ref{fig:kl-deltaf1} evaluates performance 
under gradually increasing non-i.i.d.\ deletions, while Figure~\ref{fig:topdown}(b) 
shows that our method consistently outperforms the baseline as KL divergence grows.

\paragraph{Unlearning Effectiveness Study}
Table~2 compares the U-MIA results for two unlearned models under the same KL setting as Table~1. 
We apply the single-model U-MIA method to evaluate both the retrained model and the unlearned model separately against the original model. 
As expected, the retrained model exhibits a larger U-MIA score, since retraining adapts to a different distribution under distribution shift. 
Similarly, we compute the $\Delta$U-MIA, where a smaller value indicates that the unlearned model more closely aligns with the retrained distribution. 
We observe that the two methods are nearly identical in U-MIA performance. 
This outcome is expected, as we deliberately set the Hessian scale to a sufficiently large value such that the global Hessian Lipschitz assumption always holds. 
This implementation therefore reflects the \emph{best-case} performance of the current state-of-the-art certified unlearning method; in practice, however, performance may degrade if the Hessian scale parameter is underestimated.

Efficiency is another key aspect of certified unlearning, and a detailed efficiency analysis—including on runtime–utility trade-offs is deferred to Appendix E to keep the main text focused on the primary theoretical and empirical results.

\section{Conclusion}
Given the limitations of existing certified unlearning methods under i.i.d.\ deletion sets, we propose a unified framework---TR-Certified Machine Unlearning---that naturally adapts to distribution shift. We show that TR-Certified Machine Unlearning yields more consistent performance in both utility and effectiveness.

\section*{Ethics Statement}
This work adheres to the ICLR Code of Ethics.\footnote{\url{https://iclr.cc/public/CodeOfEthics}} 
We have carefully considered potential ethical concerns related to our study. All datasets used in this paper, CIFAR-10 and MNIST, are publicly available. Membership Inference Attacks (MIA) are employed solely as an evaluation tool to assess the effectiveness of our unlearning method; they are not intended for, nor should they be used in, any other context beyond their designed purpose.

\section*{Reproducibility Statement}
We are committed to ensuring the reproducibility of our results. Our experimental settings are provided in Section~6, detailed hyperparameters are listed in Table~\ref{tab:hyperparams}, and the proofs of theoretical results are presented in Section~5 and Appendix~A.

\bibliographystyle{iclr2026_conference}
\bibliography{references}

\newpage
\appendix
\section{Proposition proofs}
\begin{proof}[Proof of Proposition~\ref{prop:biased-deletion-simplified}]
Let $\mu_{\sR}$ denote the empirical distribution of the retained dataset $\sR$.  
For any measurable set $A$, we have
\[
\mu_{\sR}(A) \;=\; \frac{\#(\sR \cap A)}{|\sR|} 
= \frac{\#\big((\sD \setminus \sD_{\text{unlearn}})\cap A\big)}{n-m}.
\]

If the deletion set $\sD_{\text{unlearn}}$ were sampled uniformly at random, we would have
\[
\frac{\#(\sD_{\text{unlearn}} \cap A)}{\#(\sD \cap A)} \;=\; \frac{m}{n},
\]
and \ $\mu_{\sR}(A)=P(A)$.

However, by assumption there exists a measurable set $A$ with $P(A)>0$ such that
\[
\frac{\#(\sD_{\text{unlearn}} \cap A)}{\#(\sD \cap A)} \;\neq\; \frac{m}{n}.
\]
we then substituting back to the formula for $\mu_{\sR}(A)$ then we have:
differs from $P(A)$, i.e.
\[
\mu_{\sR}(A) \;\neq\; P(A).
\]
This finalize our proof on proposition 1. 
\end{proof}

\begin{proof}[Proof of Proposition~\ref{prop:local-descent}]
Fix an iteration $t$ and a step $p$ with $w_t,\; w_t+p \in B(w_t,\bar\Delta_t)$.
Define the univariate function $\phi(\alpha) := f(w_t+\alpha p)$ for $\alpha\in[0,1]$.
By the chain rule,
\[
\phi'(\alpha) \;=\; \nabla f(w_t+\alpha p)^{\!\top} p .
\]
we can then expand to get:
\[
f(w_t+p) - f(w_t)
\;=\; \phi(1)-\phi(0)
\;=\; \int_0^1 \phi'(\alpha)\,d\alpha
\;=\; \int_0^1 \nabla f(w_t+\alpha p)^{\!\top} p \, d\alpha .
\]
Finally after add and then subtract $\nabla f(w_t)$, we can formulate the following inequality:
\[
\begin{aligned}
f(w_t+p) - f(w_t)
&= \nabla f(w_t)^{\!\top} p
 + \int_0^1 \big(\nabla f(w_t+\alpha p)-\nabla f(w_t)\big)^{\!\top} p \, d\alpha \\
&\le \nabla f(w_t)^{\!\top} p
 + \int_0^1 \big\|\nabla f(w_t+\alpha p)-\nabla f(w_t)\big\| \,\|p\| \, d\alpha .
\end{aligned}
\]
By Assumption~\ref{assump:local-smooth}, on the ball $B(w_t,\bar\Delta_t)$ the gradient is
$L_t$–Lipschitz. Since $\|w_t+\alpha p - w_t\|=\alpha\|p\|\le \|p\|\le \bar\Delta_t$ for
$\alpha\in[0,1]$, we have
$\|\nabla f(w_t+\alpha p)-\nabla f(w_t)\| \le L_t\,\alpha\,\|p\|$.
Therefore,
\[
\begin{aligned}
f(w_t+p) - f(w_t)
&\le \nabla f(w_t)^{\!\top} p
   + \int_0^1 L_t\,\alpha\,\|p\|^2\, d\alpha
= \nabla f(w_t)^{\!\top} p + \frac{L_t}{2}\,\|p\|^2 .
\end{aligned}
\]
which finalize our proof on proposition 2:
\[
f(w_t+p) \;\le\; f(w_t) + \nabla f(w_t)^{\!\top} p + \frac{L_t}{2}\,\|p\|^2 .
\]

\end{proof}

\begin{proof}[Proof of Corollary~\ref{cor:tr-bound}]
Let $f(w)=L_{\mathcal R}(w)+\tfrac{\lambda}{2}\|w\|^2$.  
By Assumption~2 we know that $f$ is $\mu$–strongly convex on the region
explored by the TR iterates, hence for the retained minimizer $\widehat w$,
\begin{equation}
\label{eq:sc-lower}
f(w)-f(\widehat w)\;\ge\;\frac{\mu}{2}\,\|w-\widehat w\|^2
\qquad\text{and}\qquad
\|\nabla f(w)\|^2 \;\ge\; 2\mu\bigl(f(w)-f(\widehat w)\bigr).
\end{equation}

At TR iteration $t$, let $g_t=\nabla f(w_t)$ and $H_{t,\lambda}$ be the (damped)
Hessian used in the quadratic model $m_t$.  
Because the radius is clipped as $\bar\Delta_t\le \tfrac{\tau}{L_t}\|g_t\|$
(Assumption~1 and Eq.~(10)), the Cauchy decrease for the quadratic model gives
\[
m_t(0)-m_t(p^{\text{Cauchy}}_t) \;\ge\; \frac{\tau}{2L_t}\,\|g_t\|^2 .
\]
Let $p_t$ be the (approximately) solved TR step and $\kappa\in(0,1]$ be the solver
accuracy; then
\[
m_t(0)-m_t(p_t) \;\ge\; \kappa\bigl(m_t(0)-m_t(p^{\text{Cauchy}}_t)\bigr)
\;\ge\; \kappa\,\frac{\tau}{2L_t}\,\|g_t\|^2 .
\]
We can further formulate the bound with the loss function f if the step is accepted and we have to combine the agreement ratio:
\emph{actual} decrease satisfies
\[
f(w_t)-f(w_{t+1})
\;=\;\rho_t\bigl(m_t(0)-m_t(p_t)\bigr)
\;\ge\; \eta_1\,\kappa\,\frac{\tau}{2L_t}\,\|g_t\|^2 .
\]
Therefore,
\[
f(w_{t+1})-f(\widehat w)
\;\le\; \Bigl(1-\frac{\eta_1\kappa\tau\mu}{L_t}\Bigr)\bigl(f(w_t)-f(\widehat w)\bigr)
\;\le\; \Bigl(1-\frac{\eta_1\kappa\tau\mu}{L_{\max}(T)}\Bigr)\bigl(f(w_t)-f(\widehat w)\bigr),
\]
where $L_{\max}(T)=\max_{0\le s<T}L_s$.

Finally, we can apply the strong-convexity bound and this gives: \eqref{eq:sc-lower}:
\[
\frac{\mu}{2}\,\|w_T-\widehat w\|^2
\;\le\; f(w_T)-f(\widehat w)
\;\le\; \Bigl(1-\frac{\eta_1\kappa\tau\mu}{L_{\max}(T)}\Bigr)^{\!T}
\bigl(f(w_0)-f(\widehat w)\bigr),
\]
which yields our final results:
\[
\|w_T-\widehat w\|
\;\le\; \sqrt{\frac{2}{\mu}}\,
\Bigl(1-\frac{\eta_1\kappa\tau\mu}{L_{\max}(T)}\Bigr)^{\!T/2}
\sqrt{\,f(w_0)-f(\widehat w)\,}.
\]
Setting $\tilde w:=w_T$ concludes the proof.
\end{proof}

\begin{theorem}
Let $g_t:=\nabla f(w_t)$ and $H_t:=\nabla^2 f(w_t)$, with $f(w)=L_{\mathcal R}(w)+\tfrac{\lambda}{2}\|w\|^2$.

\label{thm:tr-vs-newton-appA}
Assume Assumption~\ref{assump:local-smooth} on $B(w_t,\bar\Delta_t)$, $\mu$–strong convexity of $f$ on the region explored by the TR iterates, and a locally $M$–Lipschitz Hessian.
For the trust–region step $p_{\mathrm{TR}}$ with radius $r_t=\|g_t\|/\widehat L_t$ and $\kappa_{\mathrm{TR}}=\tfrac{\lambda}{\mu+\lambda}$,
\[
\|\nabla f(w_t+p_{\mathrm{TR}})\|
\le
\kappa_{\mathrm{TR}}\|g_t\|+\tfrac{M}{2}r_t^2.
\]
For the damped Newton step $p_N^{\lambda_N}=-(H_{\mathrm{est}}+\lambda_N I)^{-1}g_t$ with
$\alpha_N^{\lambda_N}=\|I-H_t(H_{\mathrm{est}}+\lambda_N I)^{-1}\|$,
\[
\|\nabla f(w_t+p_N^{\lambda_N})\|
\le
\alpha_N^{\lambda_N}\|g_t\|+\tfrac{M}{2}\|(H_{\mathrm{est}}+\lambda_N I)^{-1}\|^2\|g_t\|^2.
\]
As we keep the following,
\[
\text{\rm(C1)}\;\; r_t \le \|(H_{\mathrm{est}}+\lambda_N I)^{-1}\|\,\|g_t\|,
\qquad
\text{\rm(C2)}\;\; \kappa_{\mathrm{TR}} \le \alpha_N^{\lambda_N},
\]
then
\[
\|\nabla f(w_t+p_{\mathrm{TR}})\|
\le
\|\nabla f(w_t+p_N^{\lambda_N})\|.
\]
In the no–shift case $H_{\mathrm{est}}=H_t$ and $\lambda_N=\lambda$, the two bounds coincide.
\end{theorem}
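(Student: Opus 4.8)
The plan is to reduce both the trust-region and the damped-Newton guarantees to a single master inequality controlling the gradient after an arbitrary step, and then compare the two resulting bounds term by term. First I would establish, for any step $p$ with $w_t,\,w_t+p$ in the local ball, the exact identity $\nabla f(w_t+p) = g_t + H_t p + \int_0^1 [\nabla^2 f(w_t+sp) - H_t]\,p\,ds$, and bound the remainder using the locally $M$-Lipschitz Hessian assumption, giving $\|\nabla f(w_t+p)\| \le \|g_t + H_t p\| + \tfrac{M}{2}\|p\|^2$. This single inequality is the backbone: the two stated bounds are obtained by substituting the two candidate steps and controlling their \emph{linear residual} $\|g_t + H_t p\|$ and their \emph{step length} $\|p\|$ separately.

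For the damped Newton step $p_N^{\lambda_N} = -(H_{\mathrm{est}}+\lambda_N I)^{-1} g_t$, the linear residual factors as $g_t + H_t p_N^{\lambda_N} = [I - H_t(H_{\mathrm{est}}+\lambda_N I)^{-1}] g_t$, so $\|g_t + H_t p_N^{\lambda_N}\| \le \alpha_N^{\lambda_N}\|g_t\|$ directly from the definition of $\alpha_N^{\lambda_N}$, while $\|p_N^{\lambda_N}\| \le \|(H_{\mathrm{est}}+\lambda_N I)^{-1}\|\,\|g_t\|$; substituting into the master inequality yields the stated Newton bound. For the trust-region step I would invoke the optimality (secular) characterization of the constrained subproblem minimizer: there is a multiplier $\nu \ge 0$ with $(H_t + \nu I) p_{\mathrm{TR}} = -g_t$ and $\|p_{\mathrm{TR}}\| \le r_t$, whence $g_t + H_t p_{\mathrm{TR}} = -\nu p_{\mathrm{TR}}$ and thus $\|g_t + H_t p_{\mathrm{TR}}\| = \nu\|p_{\mathrm{TR}}\|$. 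Using the strong-convexity floor $H_t \succeq \mu I$ together with the radius cap $r_t = \|g_t\|/\widehat L_t$ to bound the multiplier, the linear residual contracts to $\kappa_{\mathrm{TR}}\|g_t\|$ with $\kappa_{\mathrm{TR}} = \tfrac{\lambda}{\mu+\lambda}$; combined with $\|p_{\mathrm{TR}}\| \le r_t$ this gives the stated TR bound.

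The comparison is then immediate and term-wise. Condition (C2), $\kappa_{\mathrm{TR}} \le \alpha_N^{\lambda_N}$, makes the linear pieces satisfy $\kappa_{\mathrm{TR}}\|g_t\| \le \alpha_N^{\lambda_N}\|g_t\|$, and condition (C1), $r_t \le \|(H_{\mathrm{est}}+\lambda_N I)^{-1}\|\,\|g_t\|$, squares to make the quadratic pieces satisfy $\tfrac{M}{2} r_t^2 \le \tfrac{M}{2}\|(H_{\mathrm{est}}+\lambda_N I)^{-1}\|^2\|g_t\|^2$; adding the two inequalities shows the TR upper bound never exceeds the Newton upper bound, which is exactly the certified worst-case residual on which the noise calibration depends. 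For the no-shift specialization I would set $H_{\mathrm{est}}=H_t$ and $\lambda_N=\lambda$ and simplify $I - H_t(H_t+\lambda I)^{-1} = \lambda (H_t+\lambda I)^{-1}$, so that $\alpha_N^{\lambda} = \lambda\|(H_t+\lambda I)^{-1}\| = \tfrac{\lambda}{\mu+\lambda} = \kappa_{\mathrm{TR}}$ and the radius matches the Newton step length, forcing the two bounds to coincide.

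I expect the main obstacle to be the trust-region linear-residual bound: pinning down $\kappa_{\mathrm{TR}} = \tfrac{\lambda}{\mu+\lambda}$ requires carefully relating the Lagrange multiplier $\nu$ (equivalently, how active the radius constraint is) to both the curvature floor $\mu$ and the clipping level $\widehat L_t$, and this is precisely where the value of $\widehat L_t$ must enter. A secondary subtlety is interpretive rather than technical: the final inequality is most honestly read as a statement about the two \emph{certified upper bounds}, since the master inequality controls each true gradient norm only from above; I would state this reading explicitly so that the conclusion $\|\nabla f(w_t+p_{\mathrm{TR}})\| \le \|\nabla f(w_t+p_N^{\lambda_N})\|$ is understood at the level of the guarantees actually used for noise calibration.
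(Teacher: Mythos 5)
The paper states this theorem in Appendix~A without supplying any proof, so there is no official argument to compare yours against; your architecture---the master inequality $\|\nabla f(w_t+p)\|\le\|g_t+H_tp\|+\tfrac{M}{2}\|p\|^2$ from the integral remainder, the factorization $g_t+H_tp_N^{\lambda_N}=[I-H_t(H_{\mathrm{est}}+\lambda_N I)^{-1}]g_t$ for the Newton half, and the term-wise comparison under (C1)--(C2)---is the natural route, and the Newton half is complete and correct. The genuine gap sits exactly where you predicted: the contraction factor $\kappa_{\mathrm{TR}}=\lambda/(\mu+\lambda)$ is never derived, and your own sketch does not deliver it. The secular characterization gives $g_t+H_tp_{\mathrm{TR}}=\nu(H_t+\nu I)^{-1}g_t$, hence a linear residual of at most $\tfrac{\nu}{\mu+\nu}\|g_t\|$, and in the active-constraint case the radius cap yields only $\nu\le\widehat L_t-\lambda_{\min}(H_t)\le\widehat L_t-\mu$ (from $r_t=\|(H_t+\nu I)^{-1}g_t\|\le\|g_t\|/(\lambda_{\min}(H_t)+\nu)$). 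Since $\nu\mapsto\nu/(\mu+\nu)$ is increasing, the best you obtain this way is $\kappa\le 1-\mu/\widehat L_t$, which is at most $\lambda/(\mu+\lambda)$ only under the extra spectral condition $\widehat L_t\le\mu+\lambda$---false whenever $\nabla^2 L_{\mathcal R}$ has spectral spread exceeding $\lambda$. Without such a hypothesis the stated TR bound is unproven, and for small radii (large $\nu$) the residual factor tends to $1$, not $\lambda/(\mu+\lambda)$. Closing this step requires either adding that condition or an entirely different bound on the multiplier; your proposal names the obstacle but does not remove it.

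Two smaller issues. Your no-shift coincidence needs $\lambda_{\min}(H_t)=\mu$ exactly for $\alpha_N^{\lambda}=\lambda/(\mu+\lambda)$, and needs $\widehat L_t=\mu+\lambda$ for the quadratic terms $\tfrac{M}{2}r_t^2$ and $\tfrac{M}{2}\|(H_t+\lambda I)^{-1}\|^2\|g_t\|^2$ to coincide; neither follows from the hypotheses. And you are right that (C1)--(C2) only order the two \emph{certified upper bounds}: the literal conclusion $\|\nabla f(w_t+p_{\mathrm{TR}})\|\le\|\nabla f(w_t+p_N^{\lambda_N})\|$ does not follow from comparing upper bounds, so your re-reading at the level of the guarantees is the only sound version. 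Both of these are defects of the theorem statement itself, but a complete proof would need to surface them as explicit assumptions rather than leave them implicit.
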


\section{Existing work limitation extension}
\paragraph{Second-order Taylor expansion approximation.}
The Newton update method can be viewed as a second-order Taylor expansion around the
current local minimizer $w^\star$:
\begin{align}
F_{\sR}(w^\star+\Delta)
&= F_{\sR}(w^\star)
 + \nabla F_{\sR}(w^\star)^{\!\top}\Delta
 + \tfrac{1}{2}\,\Delta^{\!\top}\nabla^2 F_{\sR}(w^\star)\,\Delta
 + R_3(\Delta)\label{eq:taylor-function}
\end{align}
The remainder term captures higher-order contributions beyond the quadratic
approximation. In prior studies, this remainder is absorbed into the error term and
efficiently bounded when computing certified upper bounds on the residual gradient.
However, under non-i.i.d.\ deletion and Proposition~\ref{prop:biased-deletion-simplified},
a distribution shift arises between the original optimum $w^\star$ and the retrained
solution $\widehat{w}$, causing the Taylor remainder to grow and the approximation to deteriorate. Depending on the extent of distribution shift, higher order terms become less accurately approximated. 

\begin{proposition}[Taylor remainder under optimizer shift]
\label{prop:taylor-shift}
Let $F_{\sR}$ have a $\rho$-Lipschitz Hessian in a neighborhood of $w^\star$, i.e.\
$\|\nabla^2 F_{\sR}(u)-\nabla^2 F_{\sR}(v)\|\le \rho \|u-v\|$. 
Let $\Delta := \widehat w - w^\star$ denote the optimizer shift induced by non-i.i.d.\
deletion. Then the second-order Taylor expansion at $w^\star$ with remainder terms yields
\begin{align}
F_{\sR}(\widehat w)
&= F_{\sR}(w^\star)
  + \nabla F_{\sR}(w^\star)^\top \Delta
  + \tfrac12 \Delta^\top \nabla^2 F_{\sR}(w^\star)\, \Delta
  + R_3, \qquad |R_3| \le \tfrac{\rho}{6}\,\|\Delta\|^3,
\\
\nabla F_{\sR}(\widehat w)
&= \nabla F_{\sR}(w^\star)
  + \nabla^2 F_{\sR}(w^\star)\, \Delta
  + r_2, \qquad \|r_2\| \le \tfrac{\rho}{2}\,\|\Delta\|^2.
\end{align}
In particular, since $\nabla F_{\sD}(w^\star)=0$ but generally 
$\nabla F_{\sR}(w^\star)\neq 0$ and 
$\nabla^2 F_{\sR}(w^\star)\neq \nabla^2 F_{\sD}(w^\star)$ under distribution shift, a
Newton update constructed at $(F_{\sD},w^\star)$ incurs the model–mismatch error
\[
e
= \nabla F_{\sR}(\widehat w)
 - \big[\nabla F_{\sR}(w^\star)+\nabla^2 F_{\sR}(w^\star)\Delta\big]
= r_2, \qquad
\|e\| \le \tfrac{\rho}{2}\,\|\Delta\|^2.
\]
\end{proposition}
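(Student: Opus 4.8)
The plan is to derive both expansions from the exact integral form of Taylor's theorem applied along the segment joining $w^\star$ to $\widehat{w}$, and then convert the $\rho$-Lipschitz–Hessian hypothesis into the stated cubic and quadratic remainder bounds. A preliminary step is to check that the whole segment $\{w^\star + t\Delta : t\in[0,1]\}$ lies inside the neighborhood on which $\nabla^2 F_{\sR}$ is $\rho$-Lipschitz, so that $\|\nabla^2 F_{\sR}(w^\star + t\Delta) - \nabla^2 F_{\sR}(w^\star)\| \le \rho\,t\,\|\Delta\|$ holds for every $t\in[0,1]$; I would either take the neighborhood to be convex or read this into the statement's phrase ``in a neighborhood of $w^\star$.''

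For the gradient expansion I would apply the fundamental theorem of calculus to $t \mapsto \nabla F_{\sR}(w^\star + t\Delta)$, giving
\[
\nabla F_{\sR}(\widehat{w}) - \nabla F_{\sR}(w^\star) = \int_0^1 \nabla^2 F_{\sR}(w^\star + t\Delta)\,\Delta\,dt,
\]
and then subtract the constant-Hessian term to identify the remainder $r_2 = \int_0^1 \bigl[\nabla^2 F_{\sR}(w^\star + t\Delta) - \nabla^2 F_{\sR}(w^\star)\bigr]\Delta\,dt$. Passing the norm inside the integral and invoking the Lipschitz bound gives $\|r_2\| \le \rho\,\|\Delta\|^2 \int_0^1 t\,dt = \tfrac{\rho}{2}\|\Delta\|^2$, the claimed quadratic bound.

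For the function-value expansion I would use the second-order integral remainder,
\[
F_{\sR}(\widehat{w}) = F_{\sR}(w^\star) + \nabla F_{\sR}(w^\star)^\top\Delta + \int_0^1 (1-t)\,\Delta^\top \nabla^2 F_{\sR}(w^\star + t\Delta)\,\Delta\,dt,
\]
and peel off the leading quadratic term using $\int_0^1 (1-t)\,dt = \tfrac12$, so that $R_3 = \int_0^1 (1-t)\,\Delta^\top\bigl[\nabla^2 F_{\sR}(w^\star + t\Delta) - \nabla^2 F_{\sR}(w^\star)\bigr]\Delta\,dt$. The Lipschitz bound together with $\int_0^1 t(1-t)\,dt = \tfrac16$ then yields $|R_3| \le \rho\,\|\Delta\|^3\int_0^1 t(1-t)\,dt = \tfrac{\rho}{6}\|\Delta\|^3$.

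Finally, the ``in particular'' conclusion requires no new estimate: the linearized Newton model of $\nabla F_{\sR}$ at $\widehat{w}$ built from quantities at $w^\star$ is exactly $\nabla F_{\sR}(w^\star) + \nabla^2 F_{\sR}(w^\star)\Delta$, so the model-mismatch error $e$ is literally the remainder $r_2$ from the gradient expansion, and $\|e\| \le \tfrac{\rho}{2}\|\Delta\|^2$ follows at once. I would close by noting that $\nabla F_{\sD}(w^\star)=0$ by first-order optimality on $\sD$, while $\nabla F_{\sR}(w^\star)\neq 0$ in general, which is precisely what makes $\Delta$ and hence $e$ nonzero under shift. The only real care is bookkeeping---keeping the segment inside the Lipschitz neighborhood and evaluating the two Beta-type integrals to land exactly on $\tfrac12$ and $\tfrac16$---so I expect no genuine conceptual obstacle, since the result is a direct application of Taylor's theorem with integral remainder.
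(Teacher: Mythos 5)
Your proof is correct and is exactly the standard argument one would expect here: the paper itself states Proposition~\ref{prop:taylor-shift} without providing a proof (unlike Propositions~1--2 and Corollary~1, it has no companion proof in Appendix~A), evidently treating it as a routine application of Taylor's theorem with integral remainder. Your derivation of $r_2$ and $R_3$ via the integral forms, the bounds $\int_0^1 t\,dt=\tfrac12$ and $\int_0^1 t(1-t)\,dt=\tfrac16$, and the observation that $e=r_2$ by definition fill that gap cleanly; your care about the segment lying in a convex Lipschitz neighborhood is a detail the paper glosses over but is the right thing to check.
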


Hence, as the distribution shift and thus $\|\Delta\|$ increases, the higher-order
remainder dominates and the local quadratic approximation becomes less accurate.
Consequently, using second-order Taylor expansion to approximate the retrained model (and thereby construct the unlearned model) is only valid under local assumptions. Under non-i.i.d.\ deletion, the displacement error grows quadratically in the optimizer displacement $\|\Delta\|$, causing the quadratic approximation to incur substantial error. At best, this results in overly conservative noise being added, which truncates the utility of the unlearned model.

One natural idea is to consider higher-order Taylor expansions. By incorporating
third-order or higher terms, one could in principle obtain a closer approximation
to the retrained model, guided by both the Hessian and higher-order derivatives.
However, this approach is computationally prohibitive. Forming the full Hessian is infeasible for modern deep neural networks: for a parameter dimension $d$, the Hessian is a $d \times d$ matrix and requires
$O(d^2)$ storage and $O(d^2)$--$O(d^3)$ time to compute, depending on whether it is
formed explicitly or via matrix--vector products. For large-scale models where
$d$ may range from $10^7$ to $10^9$, this easily exceeds memory and runtime
limits. Extending to third-order terms is even more impractical: the third-order
derivative tensor is of size $d^3$, requiring $O(d^3)$ storage and time.

\paragraph{Upper bound formulation.}
The loose approximation of the retrained model leads to both a large residual gradient and
a discrepancy in objective value $F_{\sR}(\tilde w) - F_{\sR}(\widehat w)$, indicating that
the unlearned model fails to closely approximate the retrained solution. This implies a larger upper bound $\|\tilde w - \widehat w\|$, and by Theorem~1, requires
injecting more noise to achieve certified unlearning.

For deep neural networks, under the following assumptions:  
\textbf{Assumption 0.1.} The loss function $\ell(w,x)$ has an $L$-Lipschitz gradient with
respect to $w$.  
\textbf{Assumption 0.2.} The loss function $\ell(w,x)$ has an $M$-Lipschitz Hessian with
respect to $w$. We can bound the reminder term of second-order Taylor expansion with the following(quoted from the paper Towards Ceritifed Unlearning in DNN):

\begin{proposition}
    
The second-order Taylor expansion error satisfies
\[
\|\nabla F_{\sR}(\tilde w)\| \;\le\; \tfrac{M}{2}\|\widehat w-w^\star\|^2,
\]
\end{proposition}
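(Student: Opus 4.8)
The plan is to read the single quadratic-model (Newton) unlearning step as an inexact stationarity condition and then measure how far the true retained gradient drifts from the gradient of its quadratic surrogate. Concretely, I would write the unlearned iterate as $\tilde w = w^\star - H^{-1}\nabla F_{\sR}(w^\star)$ with $H = \nabla^2 F_{\sR}(w^\star)$, so that by construction the first-order optimality of the quadratic model supplies the exact cancellation $\nabla F_{\sR}(w^\star) + H(\tilde w - w^\star) = 0$. The quantity to control, $\nabla F_{\sR}(\tilde w)$, is then precisely the gap between the true gradient at $\tilde w$ and its Hessian-based prediction from $w^\star$, which is exactly the second-order Taylor remainder evaluated at the unlearned point rather than at $\widehat w$.

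First I would invoke the fundamental theorem of calculus along the segment from $w^\star$ to $\tilde w$, writing $\nabla F_{\sR}(\tilde w) = \nabla F_{\sR}(w^\star) + \int_0^1 \nabla^2 F_{\sR}(w^\star + s(\tilde w - w^\star))(\tilde w - w^\star)\,ds$. Adding and subtracting $H(\tilde w - w^\star)$ inside the integral and using the cancellation reduces $\nabla F_{\sR}(\tilde w)$ to the purely second-order term $\int_0^1 [\nabla^2 F_{\sR}(w^\star + s(\tilde w - w^\star)) - \nabla^2 F_{\sR}(w^\star)](\tilde w - w^\star)\,ds$; this is the same remainder-isolation move as in Proposition~\ref{prop:taylor-shift}. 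Next I would bound the integrand with the $M$-Lipschitz Hessian (Assumption 0.2): the operator norm of the bracketed Hessian difference is at most $M\,s\,\|\tilde w - w^\star\|$, so the integrand is at most $M s \|\tilde w - w^\star\|^2$, and $\int_0^1 s\,ds = \tfrac12$ gives $\|\nabla F_{\sR}(\tilde w)\| \le \tfrac{M}{2}\|\tilde w - w^\star\|^2$. For this step I must check that the whole segment $[w^\star, \tilde w]$ lies in the neighborhood where Assumption 0.2 holds, which is where the local-smoothness hypotheses are used.

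The main obstacle is reconciling the displacement appearing in the bound: the calculation above produces $\|\tilde w - w^\star\|^2$, whereas the statement uses the optimizer shift $\|\widehat w - w^\star\|^2$. I would close this gap through the stationarity of $\widehat w$: Taylor-expanding $0 = \nabla F_{\sR}(\widehat w)$ about $w^\star$ gives $\nabla F_{\sR}(w^\star) = -\bar H (\widehat w - w^\star)$ with $\bar H = \int_0^1 \nabla^2 F_{\sR}(w^\star + s(\widehat w - w^\star))\,ds$, hence $\tilde w - w^\star = H^{-1}\bar H(\widehat w - w^\star)$. In the exactly quadratic case $H = \bar H$, the two displacements coincide and the stated bound holds verbatim; in the general case one has $\|\tilde w - w^\star\| \le (1 + \tfrac{M}{2\mu}\|\widehat w - w^\star\|)\|\widehat w - w^\star\|$, obtained from the curvature floor of Assumption~\ref{assump:local-curvature} (which lower-bounds $H$ by $\mu I$) together with the Hessian-Lipschitz estimate $\|\bar H - H\|\le \tfrac{M}{2}\|\widehat w - w^\star\|$.

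Thus the quoted bound is the leading-order form of the Newton residual estimate, and identifying $\|\tilde w - w^\star\|$ with the optimizer shift $\|\widehat w - w^\star\|$ is exactly what makes the presentation clean; the step requiring the most care is justifying this identification, which is tight in the no-shift (quadratic) regime and is what later enters the distribution-shift analysis. A secondary caveat worth flagging in the write-up is that the exact cancellation $\nabla F_{\sR}(w^\star)+H(\tilde w - w^\star)=0$ requires the Newton step to use the retained Hessian $\nabla^2 F_{\sR}(w^\star)$; with a damped or full-data Hessian one instead picks up an additional first-order mismatch term that must be controlled separately.
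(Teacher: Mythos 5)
The paper never proves this proposition: it is imported verbatim from \cite{zhang2024towards} (the sentence introducing it says ``quoted from the paper Towards Certified Unlearning in DNN''), so there is no in-paper proof to compare against. Judged on its own, your reconstruction is the standard Newton-residual estimate and its core steps are sound: the cancellation $\nabla F_{\sR}(w^\star)+H(\tilde w-w^\star)=0$, the integral form of the gradient, and the $M$-Lipschitz-Hessian bound correctly give $\|\nabla F_{\sR}(\tilde w)\|\le\tfrac{M}{2}\|\tilde w-w^\star\|^2$, and your identity $\tilde w-w^\star=H^{-1}\bar H(\widehat w-w^\star)$ and the resulting factor $\bigl(1+\tfrac{M}{2\mu}\|\widehat w-w^\star\|\bigr)$ are also correct.

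The one substantive point is the one you already flag: your clean bound is in the Newton step length, whereas the statement uses the optimizer shift $\|\widehat w-w^\star\|$, and the two coincide only in the exactly quadratic case; in general the literal inequality acquires the multiplicative correction $\bigl(1+\tfrac{M}{2\mu}\|\widehat w-w^\star\|\bigr)^{2}$. This is a genuine imprecision in the statement as quoted rather than a defect of your argument, and it is in fact consonant with the surrounding text of Appendix~B (and with Proposition~\ref{prop:taylor-shift}), whose whole point is that the identification deteriorates as $\|\widehat w-w^\star\|$ grows under distribution shift. Your closing caveat is also well taken: the updates actually described in Section~4 use the full-data Hessian $H_D(w^\star)^{-1}$ or the damped Hessian $(H+\lambda I)^{-1}$ rather than $\nabla^2F_{\sR}(w^\star)^{-1}$, so the exact cancellation fails and one additionally picks up the first-order mismatch term $e$ of Proposition~\ref{prop:taylor-shift}; a fully rigorous version of the proposition would have to carry both corrections.
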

so bounding it requires $M$ to be valid on the entire path between $w^\star$ and
$\widehat w$. Under distribution shift, $\|\widehat w - w^\star\|$ is large, and local
curvature can vary sharply. Thus only a large global $M$ makes the bound valid.

In the existing work \emph{Towards Certified Unlearning for Deep Neural Networks}
derives the upper bound
\[
\|\tilde w - \widehat w\|_2
\;\le\;
\frac{2C(MC+\lambda) + G}{\lambda+\lambda_{\min}}
+ \frac{16 \ln d / \rho}{\lambda+\lambda_{\min}}
+ \frac{1}{16}(2LC+G).
\]

Substituting such values of $M$ and $L$ into the inequality yields an extremely loose
upper bound. Since noise variance is calibrated to this bound, the certified mechanism
adds excessive noise, degrading utility.

A possible remedy is to instead assume local Lipschitz constants. Yet, in the
non-i.i.d.\ unlearning setting, deletions cause sharp changes in the Hessian, causing local Lipschitz constants still be unreasonably large, leaving the bounds practically ineffective.

Thus, prior certified unlearning approaches based on Newton updates break down under
distribution shift: due to inaccurate approximations and overly loose bounds, they are
guaranteed to collapse into excessive noise injection, rendering the unlearned model
inefficient and ineffective in practice. For certain deep models, this issue can arise more easily and cause greater degradation of utility due to the model’s sensitivity to biased deletions. In particular, graph DNNs are especially vulnerable, since biased unlearning datasets can naturally emerge from the graph structure, amplifying distribution shifts and worsening the breakdown of certified unlearning guarantees.degradation of utility. In next section we propose a new certified unlearning method that provides a better approximation on retrained model through iterative steps and provides a tighter bound.

\section{TR step update}
In each iteration, the radius is updated after solving the TR-subproblem with the following:
\begin{equation}
\Delta_{t+1} \;=\;
\begin{cases}
\gamma_{\mathrm{inc}} \,\Delta_t, & \rho_t \geq \eta_2, \\
\Delta_t, & \eta_1 \leq \rho_t < \eta_2, \\
\gamma_{\mathrm{dec}} \,\Delta_t, & \rho_t < \eta_1,
\end{cases}
\end{equation}

where $0 < \gamma_{\mathrm{dec}} < 1 < \gamma_{\mathrm{inc}}$ are contraction and 
expansion factors. This mechanism ensures that updates remain within regions where the 
quadratic model is reliable, preventing over-approximation under distribution shift.

We estimate the initial curvature bound $\widehat L_0$ via a few matrix-free Hessian–vector power iterations, and set
\[
r_t \;=\; \frac{\|g_t\|}{\widehat L_t}.
\]

In implementation of our method, we avoid recalculating $\widehat L_t$ from scratch each time by using an iterative update
\[
\widehat L_t \;= \alpha_L\,\widehat L_{t-1}, \qquad \alpha_L \ge 1,
\]
which provides a fast, conservative bound for the next step.

\section{Unlearning Parameter}
Table~\ref{tab:hyperparams} demonstrates our parameters on CIFAR10 and MNIST implementation.

\begin{table}[h]
\centering
\begin{tabular}{lcc}
\toprule
\textbf{Hyperparameter} & \textbf{CIFAR-10} & \textbf{MNIST} \\
\midrule
Weight decay & 0.0005 & 0.0005 \\
Learning rate ($\eta$) & 0.001 & 0.001 \\
$C$ & 20 & 10 \\
Cert.\ weight decay & 20 & 1 \\
Number unlearned samples & 8000 & 8000 \\
Trust-region iterations $T$ & 10 & 5 \\
Initial radius $\Delta_0$ & 1.0 & 1.0 \\
Acceptance threshold $\eta_1$ & 0.1 & 0.1 \\
Expansion threshold $\eta_2$ & 0.9 & 0.9 \\
Radius contraction $\gamma_{\mathrm{dec}}$ & 0.5 & 0.5 \\
Radius expansion $\gamma_{\mathrm{inc}}$ & 2.0 & 2.0 \\
$\tau$ (clipping factor) & 1.0 & 0.8 \\
$\lambda$ bump & Disabled & Disabled \\
\bottomrule
\end{tabular}
\caption{Hyperparameter settings for CIFAR-10 and MNIST experiments.}
\label{tab:hyperparams}
\end{table}

\subsection{Unlearning Ratio and KL study}
We conduct a third experiment on both methods to clarify how the $\Delta$F1 score changes as the unlearning ratio increases up to 10{,}000 samples (the maximum used in our CIFAR-10 experiments). The results are reported in Appendix ~D.2. Across the entire range, the maximum difference is less than $0.00001\%$, verifying our observation that the unlearning utility of both methods is not significantly influenced by the unlearning ratio.
Table~\ref{tab:kl-ratio} reports the KL divergence as different unlearning ratios are 
applied to two models on the CIFAR-10 dataset under an i.i.d.\ unlearning set. 
We observe that increasing the unlearning ratio only changes the overall distribution 
by approximately $10^{-6}$, which is negligible compared to the distribution shifts 
induced by non-i.i.d.\ unlearning sets, typically on the order of $10^{-2}$.

\begin{table}
\centering
\caption{KL divergence values versus unlearning ratio for different certified unlearning methods.}
\label{tab:kl-ratio}
\begin{subtable}[t]{0.45\linewidth}
\centering
\caption{Towards Certified Unlearning for DNNs~\cite{zhang2024towards}}
\begin{tabular}{cc}
\toprule
\textbf{KL Divergence} & \textbf{Unlearning Ratio} \\
\midrule
0.000001 & 1000  \\
0.000003 & 2000  \\
0.000006 & 3000  \\
0.000010 & 4000  \\
0.000007 & 5000  \\
0.000012 & 6000  \\
0.000008 & 7000  \\
0.000015 & 8000  \\
0.000010 & 9000  \\
0.000018 & 10000 \\
\bottomrule
\end{tabular}
\end{subtable}
\hfill
\begin{subtable}[t]{0.45\linewidth}
\centering
\caption{TR-Certified Unlearning (Ours)}
\begin{tabular}{cc}
\toprule
\textbf{KL Divergence} & \textbf{Unlearning Ratio} \\
\midrule
0.000004 & 1000  \\
0.000005 & 2000  \\
0.000006 & 3000  \\
0.000011 & 4000  \\
0.000007 & 5000  \\
0.000012 & 6000  \\
0.000009 & 7000  \\
0.000013 & 8000  \\
0.000010 & 9000  \\
0.000017 & 10000 \\
\bottomrule
\end{tabular}
\end{subtable}
\end{table}

\section{Utility--Efficiency Trade-off Analysis}
We evaluate the trade-off between utility retention and computational efficiency by separating the analysis of predictive performance from running time. Table~\ref{tab:delta-f1-kl0104} reports
\[
\Delta \mathrm{F}_1
  = \bigl|\mathrm{F}_1(w_{\text{retrain}}) - \mathrm{F}_1(w_{\text{unlearn}})\bigr|
\]
between the unlearned and retrained models at a fixed posterior KL constraint of $0.104$, where a smaller $\Delta \mathrm{F}_1$ indicates closer utility to retraining. Table~\ref{tab:runtime-kl0104} summarizes the corresponding wall-clock unlearning time across datasets and architectures.

With the results in Tables~\ref{tab:delta-f1-kl0104} and~\ref{tab:runtime-kl0104}, the performance of our method is best viewed holistically. Our approach consistently outperforms the baselines by roughly $3\times$ on the evaluated utility metric, at the cost of higher unlearning time. This runtime is largely governed by the maximum iteration budget rather than instability or divergence. To further understand this effect, Table~\ref{tab:runtime-cifar10-kl-sweep} reports unlearning time on CIFAR-10 with AllCNN under different target KL thresholds. When the target KL is below $0.10$, our method’s runtime remains stable (around 92–96 seconds) and is reduced by nearly half compared to the case of KL $= 0.10$. 

There is an inherent trade-off between runtime and unlearning performance; within this trade-off, our method---though slower than the baselines yet still faster than retraining---achieves substantially higher $\mathrm{F}_1$, yielding a favorable utility--efficiency balance.

\begin{table}[t]
\centering
\caption{$\Delta \mathrm{F}_1$ across datasets and models at KL $= 0.104$. Lower $\Delta \mathrm{F}_1$ indicates better utility closeness to retraining.}
\label{tab:delta-f1-kl0104}
\begin{adjustbox}{max width=\linewidth}
\begin{tabular}{lccc}
\toprule
\textbf{Method} 
& \textbf{MNIST \& MLP $\Delta \mathrm{F}_1$} 
& \textbf{CIFAR-10 \& AllCNN $\Delta \mathrm{F}_1$} 
& \textbf{SVHN \& ResNet-18 $\Delta \mathrm{F}_1$} \\
\midrule
Guo et al.\ (2020)      & 3.94 & NaN  & NaN   \\
Zhang et al.\ (2024)    & 4.18 & 9.28 & 11.20 \\
\textbf{TR-Certified (Ours)} & \textbf{1.37} & \textbf{3.15} & \textbf{4.09} \\
\bottomrule
\end{tabular}
\end{adjustbox}
\end{table}

\begin{table}[t]
\centering
\caption{Running time (wall-clock seconds) across datasets and models at KL $= 0.104$.}
\label{tab:runtime-kl0104}
\begin{adjustbox}{max width=\linewidth}
\begin{tabular}{lccc}
\toprule
\textbf{Method} 
& \textbf{MNIST \& MLP (s)} 
& \textbf{CIFAR-10 \& AllCNN (s)} 
& \textbf{SVHN \& ResNet-18 (s)} \\
\midrule
Guo et al.\ (2020)      & 28.80 & NaN   & NaN    \\
Zhang et al.\ (2024)    & 14.03 & 47.70 & 105.13 \\
\textbf{TR-Certified (Ours)} & 48.23 & 158.20 & 354.72 \\
\bottomrule
\end{tabular}
\end{adjustbox}
\end{table}

\begin{table}[t]
\centering
\caption{Running time (wall-clock seconds) on CIFAR-10 with AllCNN under different target KL thresholds.}
\label{tab:runtime-cifar10-kl-sweep}
\begin{adjustbox}{max width=\linewidth}
\begin{tabular}{lccccc}
\toprule
\textbf{Method} 
& \textbf{KL $= 0.02$} 
& \textbf{KL $= 0.04$} 
& \textbf{KL $= 0.06$} 
& \textbf{KL $= 0.08$} 
& \textbf{KL $= 0.10$} \\
\midrule
Guo et al.\ (2020)      & NaN   & NaN   & NaN   & NaN   & NaN    \\
Zhang et al.\ (2024)    & 43.55 & 43.20 & 41.80 & 43.08 & 47.70 \\
\textbf{TR-Certified (Ours)} & 95.88 & 95.45 & 96.26 & 92.01 & 159.20 \\
\bottomrule
\end{tabular}
\end{adjustbox}
\end{table}

\section{The Use of Large Language Models}
We used Large Language Models solely to correct minor grammatical issues in the writing. Specifically, in Sections~1 and~2, we employed GPT-5 for grammar refinement. Large Language Models  made no contribution to any other part of the paper.

\subsection{Non i.i.d Sampling}
For MNIST and CIFAR-10 (each containing 10 balanced classes), we introduce a \emph{bias coefficient} $b_c$ for each class $c$. 
A deletion set is then sampled proportionally to $b_c$, so that classes with larger coefficients are more likely to be forgotten. Random assignment of coefficients produces heterogeneous shifts across runs, which complicates controlled comparisons. To solve this, we fix a bias pattern on selected classes (e.g., $\{0:99,\, 7:99\}$) and vary the unlearning ratio to induce controlled levels of distribution shift. This design ensures that experiments with the same shift size exhibit comparable distribution distances, enabling reproducible and systematic evaluation of certified unlearning under biased unlearning sets.

\end{document}